\theoremstyle{plain}
\newtheorem{theorem}{Theorem}
\newtheorem{lemma}{Lemma}
\theoremstyle{definition}
\newcommand{\Exp}{\mathbb{E}}
\newcommand{\prob}{\mathrm{P}}
\newcommand{\R}{\mathbb{R}}
\newcommand{\vb}{\mathbf{v}}
\newcommand{\y}{\mathbf{y}}
\newcommand{\loss}{\mathcal{L}}
\newcommand{\dataset}{\mathcal{D}}
\newcommand{\xb}{\mathbf{x}}
\newcommand{\defeq}{\overset{\text{\tiny def}}{=}}
\newcommand{\piref}{\pi_\textrm{ref}}
\newcommand{\softmax}{\operatorname{Softmax}}
\newcommand{\pib}{\boldsymbol{\pi}}
\newcommand{\D}{\mathcal{D}}
\newcommand{\pd}{\D}
\newcommand{\mut}{\tilde{\mu}}
\newcommand{\dkl}{\mathcal{D}_{\textrm{KL}}}
\newcommand{\X}{\mathcal{X}}
\newcommand{\Y}{\mathcal{Y}}
\newcommand{\losspo}{\loss_{\mathrm{SPO}}}
\newcommand{\losspref}{\loss_{\mathrm{pref}}}
\newcommand{\lossprefn}{\loss_{\mathrm{pref}\textrm{-}n}}
\newcommand{\lossrank}{\loss_{\mathrm{rank}}}
\newcommand{\reg}{\mathrm{Reg}}
\newcommand{\prbt}{p^{\mathrm{BT}}}
\newcommand{\prpl}{p^{\mathrm{PL}}}
\newcommand{\ab}{\boldsymbol{a}}
\newcommand{\bb}{\boldsymbol{b}}
\newcommand{\yt}{\tilde{y}}
\renewcommand{\S}{\mathcal{S}}
\definecolor{verylightgrey}{gray}{0.9}
\begin{document}

\title{Soft Preference Optimization:  Aligning Language Models to Expert Distributions}

\author{
  Arsalan Sharifnassab\thanks{\fontsize{8}{8} Correspondence to \texttt{sharifna@ualberta.ca}}\,\,\,$^{A,O}$\,\,~\textbf{Saber Salehkaleybar}$^L$\,\,
  ~Sina Ghiassian$^S$
  \\
  ~\textbf{Surya Kanoria}$^S$
  ~\textbf{Dale Schuurmans} $^{A,G}$
  \\
  University of Alberta$^A$\quad
  Openmind Research Institute$^O$\quad
  Leiden University$^L$\\
  Spotify$^S$\quad
  Google DeepMind$^G$
}

\maketitle
\begin{abstract}
We propose Soft Preference Optimization (SPO), a method for aligning generative models, such as Large Language Models (LLMs), with human preferences, without the need for a reward model. SPO optimizes model outputs directly over a preference dataset through a natural loss function that integrates preference loss with a regularization term across the model's entire output distribution rather than limiting it to the preference dataset. Although SPO does not require the assumption of an existing underlying reward model, we demonstrate that, under the Bradley-Terry (BT) model assumption, it converges to a softmax of scaled rewards, with the distribution's ``softness" adjustable via the softmax exponent, an algorithm parameter. We showcase  SPO's methodology, its theoretical foundation, and its comparative advantages in simplicity and alignment precision.
\end{abstract}

\section{Introduction}
The alignment problem focuses on adjusting a generative model (e.g., Large Language Models (LLMs)) to align its outputs  with human preferences and ethical standards or to tailor the model for specific tasks; and is especially important after supervised fine-tuning on datasets with mixed-quality samples.  
A widely embraced approach involves refining these models based on expert (i.e., human) preferences, typically expert-provided comparisons of pairs of model-generated outputs \citep{christiano2017deep}. Given a \emph{preference dataset} $\D$ and a pre-trained model $\piref$, preference alignment seeks to train a new model, $\pi_\theta$, whose outputs are better aligned with the preference in $\D$ \citep{radford2018improving,ramachandran2016unsupervised}. A notable advancement in this field has been the application of Reinforcement Learning from Human Feedback (RLHF), which involves training a reward-model based of actions preferred by humans and then optimizing  $\pi_\theta$ to maximize these learned rewards while ensuring closeness to the initial model behaviors \citep{ouyang2022training}. Despite the effectiveness of RLHF in addressing the alignment problem,  RLHF involves a relatively complex pipeline, susceptible to propagation of reward-model's biases over to the policy optimization.

Recently, several studies have introduced methods for the direct optimization of preferences, including Direct Preference Optimization (DPO) among others \citep{rafailov2023direct,amini2024direct,chowdhury2024provably,xu2024contrastive,yin2024relative,xu2023some,tunstall2023zephyr}. These approaches eliminate the need for a separate reward model training phase, instead adjusting the model directly using preference data, and often outperform  RLHF-based approaches. These reward-model-free methods enjoy advantages over RLHF-based approaches, such as simplified pipelines, reduced computational complexity, and avoidance of the bias transfer from the reward model to policy optimization. Indeed, the rationale for incorporating an additional component, the reward model, into a supervised learning context with a supervised dataset, is debatable.

In this work, we propose a simple and effective reward-model-free alignment method, termed \emph{Soft Preference Optimization} (SPO). SPO seeks to align the model's \emph{preference estimates} (detailed in Section~\ref{sec:SPO}) with expert preferences $\D$, through minimizing a loss function of the form
\begin{equation}\label{eq:separable loss intro}
\textrm{AlignmentLoss}(\pi_\theta, \piref, \D) = \textrm{PreferenceLoss}(\pi_\theta, \D) + \textrm{Regularizer}(\pi_\theta, \piref),
\end{equation}
where the Regularizer may be chosen as the KL divergence. We discuss natural choices for  the model's preference estimates and the preference loss function in Sections~\ref{sec:SPO} and~\ref{sec:reweight}.

Unlike RLHF and DPO, the development of SPO does not rely on assumptions regarding the existence of underlying rewards, such as the Bradley-Terry (BT) model \citep{bradley1952rank}. Nevertheless, we demonstrate that if the BT model is applicable and given an asymptotically large preference dataset, SPO is theoretically guaranteed to converge to a softmax of the rewards, which inspires the designation ``\emph{Soft} Preference Optimization''. Unlike DPO, which tends toward a deterministic model even with an extremely large dataset if the regularization coefficient is nearly zero \citep{azar2023general}, SPO allows for the adjustment of the softmax's exponent through an input parameter, thereby offering flexibility in modulating the ``softness" of the output distribution. 

SPO has two main distinctions from its successor reward-model-free alignment methods. The first distinction involves the choice of a preference loss that aligns model's preference estimates with expert's preferences, resulting in a favorable fixed point as discussed in the previous paragraph. 
The other distinction of SPO with DPO and similar algorithms lies in the application of regularization. DPO restricts regularization to the preference dataset, which is counter-intuitive since the dataset already provides specific data points for the model to fit; thus, additional regularization within this limited scope is unnecessary. More critically, since the preference dataset represents a tiny subset of the potential outputs of the model, focusing regularization solely within this subset can lead to undesirable, extensive shift in the model's distribution outside of the dataset, resulting in a non-coherent behaviours.
Acknowledging this limitation, SPO applies regularization across the entire output distribution of the model, not just within the confines of the preference dataset.

\section{Background}
\label{sec:background}
Consider a finite context (or query) space $\X$  and a finite action (or response) space $\Y$. For a given query $x \in \X$, a behavior policy (such as a pre-trained model) is employed to generate responses $y_1, y_2 \in \Y$. These responses are subsequently evaluated by expert raters (e.g., humans) to determine which of $y_1$ or $y_2$ constitutes a more appropriate response to the query $x$. We adopt the notation $y_1 \succ y_2$ to denote that $y_1$ is preferred over $y_2$ in a specific context. The true expert preferences are typically represented by a probability, $p^*(y_1 \succ y_2 | x)$, reflecting the inherent randomness due to the variable nature of the experts, who may be a group of humans with slightly differing preferences. A preference dataset, $\D$, is compiled by collecting expert preferences for multiple $(x; y_1, y_2)$ tuples. In detail, $\D$ comprises tuples $(x; y_w, y_l)$, where $y_w \succ y_l$ indicates the preferred (winner) and less preferred (loser) responses based on expert evaluations.

\textbf{RLHF} comprises two main phases: reward modeling and reinforcement learning (RL) fine-tuning. The initial phase, reward modeling, operates under the assumption that there exist latent rewards $r(y|x)$ that form the basis of expert preferences. This phase aims to develop a model capable of closely approximating these underlying rewards. A widely accepted method for defining these latent rewards is through the Bradley-Terry (BT) model \citep{bradley1952rank}, alongside the Plackett-Luce (PL) ranking models \citep{luce2005individual,plackett1975analysis}. The BT model posits that the distribution of expert preferences,  $p^*$, is characterized by 
\begin{equation} \label{eq:BT pref}
	\prbt(y_1\succ y_2|x) \defeq \sigma\big(r(y_1|x)-r(y_2|x)\big) = \frac{\exp\big(r(y_1|x)\big)}{\exp\big(r(y_1|x)\big)+\exp\big(r(y_2|x)\big)},
\end{equation}
where $\sigma(\cdot)$ represents the sigmoid function. 
Subsequently, the reward model $r_\phi(y|x)$ is trained to minimize the negative log-likelihood loss, $-\mathbb{E}_{(x;y_w,y_l)\sim\D}\big[\sigma\big(r(y_w|x)-r(y_l|x)\big)\big]$. The PL model generalizes the BT model for data involving rankings,  modeling the expert distribution as
\begin{equation} \label{eq:PL}
	\prpl(y_1\succ \cdots\succ y_n\,|\,x) \,\defeq\, \prod_{k=1}^{n-1}\frac{\exp\big(r(y_k|x)\big)}{\sum_{i=k}^{n}\exp\big(r(y_i|x)\big)},
\end{equation}
for all $(x;y_1,\ldots,y_n)\in \X\times\Y^n$.

The RL fine-tuning phase aims to train a model, $\pi_\theta$, to maximize a loss function of the form
\begin{equation}
	\loss_{\mathrm{RLHF}}\big(\pi_\theta, \piref, r_\phi \big) = - \Exp_{x \sim \D, y \sim \pi_\theta(\cdot|x)}\big[r_\phi(y|x)\big] + \beta \dkl\big(\pi_\theta \parallel \piref\big),
\end{equation}
where $\beta$ is a non-negative constant, $r_{\phi}$ is the trained reward function, and $\pi_{\mathrm{ref}}$ is a reference policy often acquired through supervised fine-tuning on high-quality data and  is typically identical to the behavior policy. The $D_{KL}$ term in the loss function acts as a regularizer, ensuring the model does not significantly deviate from the distribution where the reward model is most accurate.  RL fine-tuning employs reinforcement learning algorithms, like PPO \citep{schulman2017proximal}, to optimize the above loss function \citep{ouyang2022training}, introducing significant complexity into the RLHF pipeline. Additionally, the RLHF framework allows for the propagation of any generalization errors from the reward model to the RL fine-tuned model. The DPO framework \citep{rafailov2023direct} addresses these challenges by simplifying the problem into a single-phase supervised learning approach, thus avoiding the pitfalls associated with separate reward modeling and RL fine-tuning phases.

\textbf{DPO} circumvents the need for a reward model by directly optimizing the following loss function:
\begin{equation}\label{eq:DPO loss}
	\loss_{\mathrm{DPO}}\big(\pi_\theta,\piref, \D \big) = -\Exp\left[  \log\,\sigma\left(\beta \log\frac{\pi_\theta(y_w|x)}{\piref(y_w|x)} - \beta \log\frac{\pi_\theta(y_l|x)}{\piref(y_l|x)}\right) \right].
\end{equation}
It was demonstrated in \citep{rafailov2023direct}  that $\loss_{\mathrm{DPO}}$ has the same minimizer as $\loss_{\mathrm{RLHF}}$, under the conditions of the BT model, an asymptotically large dataset, and a sufficiently large model capacity (i.e., a \emph{tabular model} that encodes the probability of $\pi_\theta(y|x)$ for all $x \in \X$ and $y \in \Y$ into a vector). The DPO framework was further extended in \citep{azar2023general}, aiming to directly maximize the \emph{win-rate} of $\pi_\theta$ against $\piref$.

\section{SPO - Basic}
\label{sec:SPO}
Following \eqref{eq:separable loss intro}, we consider a loss function of the form:
\begin{equation}\label{eq:alignment loss}
	\losspo(\pi_\theta, \piref,\dataset)\,=\, \losspref (\pi_\theta, \dataset) \,+\, \reg (\pi_\theta, \piref),
\end{equation}
where $\losspref$ and $\reg$ stand for \emph{preference loss}  and \emph{regularizer}, respectively. We proceed to further detail these components.

The regularization term, $\reg (\pi_\theta, \piref)$, aims to ensure that $\pi_{\theta}$ avoids producing outputs that are highly improbable under $\piref$. A common and effective choice 
is the KL divergence, $\dkl(\pi_\theta\parallel\piref)$, although other regularization options are viable \citep{zhao2022calibrating}. Importantly, $\reg(\pi_\theta, \piref)$  does not incorporate the preference dataset $\D$ as an input. This is because within $\D$, the model aims to fit to the target preferences, making additional regularization within $\D$ unnecessary. In fact, the regularization term primarily aims to regularize $\pi_\theta$ outside $\D$. This approach diverges from the DPO and several other existing loss functions (detailed in Section~\ref{sec:literature}), which only consider the divergence of $\pi_\theta$ from $\piref$ within the preference dataset.

We now turn our attention to the preference loss. Given a query $x$, let $\pi_\theta(y|x)$ denote the probability that model $\pi_\theta$ generates output $y$.  When presented with a query $x$ and two responses, $y_1$ and $y_2$, we define the probability that $\pi_\theta$ prefers $y_1$ over $y_2$ as
\begin{equation}\label{eq:pref prob}
	\begin{split}
		\mathcal{P}_{\pi_\theta}(y_1\succ y_2\mid x) &\defeq \prob \big(\textrm{output of }\pi_\theta(\cdot|x) \textrm{ is  } y_1\,\,\big|\,\, \textrm{output of }\pi_\theta(\cdot|x) \textrm{ is in } \{y_1,y_2\}\big) 
		\\&= \frac{\pi_\theta(y_1| x)}{\pi_\theta(y_1| x)+\pi_\theta(y_2| x)},
	\end{split}
\end{equation}
where the last equality follows from the definition of conditional probability. 
We can then employ log-likelihood loss  to measure the alignment of preference-probabilities’  with the preference-dataset labels,
\begin{equation}\label{eq:cross entropy loss}
	 -\Exp_{(x;y_w,y_l)\sim\D}\big[\log\,	\mathcal{P}_{\pi_\theta}(y_w\succ y_l\mid x) \big]. 
\end{equation}
We consider a preference loss $\losspref^{\alpha} (\pi_\theta, \dataset)$ that extends the  above cross entropy loss  by employing arbitrary  exponents for $\pi_\theta$.
Specifically, we let for any $\alpha>0$,
\begin{equation}\label{eq:pref loss general}
	\losspref^{\alpha}(\pi_\theta,\D) \defeq -\frac{1}{\alpha}\Exp_{(x;y_w,y_l)\sim\D}\left[ \log\,	 \frac{\pi_\theta(y_w\mid x)^\alpha}{\pi_\theta(y_w\mid x)^\alpha+\pi_\theta(y_l\mid x)^\alpha} \right],
\end{equation}
and for $\alpha=0$,
\begin{equation}\label{eq:pref loss 0}
	\losspref^{0}(\pi_\theta,\D) \defeq -\frac{1}{2}\Exp_{(x;y_w,y_l)\sim\D}\left[\log\,	 \frac{\pi_\theta(y_w\mid x)}{\pi_\theta(y_l\mid x)} \right].
\end{equation} 
The $\losspref^{\alpha}$ takes the specific form \eqref{eq:pref loss 0} because the gradient of \eqref{eq:pref loss general}  approaches the gradient of \eqref{eq:pref loss 0}  when $\alpha \to 0$, as can be easily verified from the following  closed form expression for any $\alpha\ge0$, 
\begin{equation*}\label{eq:grad}
	-\nabla_\theta \losspref^{\alpha}(\pi_\theta,\D) =  \Exp_{(x;y_w,y_l)\sim\D}\left[ \frac{\pi_\theta(y_l| x)^\alpha}{\pi_\theta(y_w | x)^\alpha+\pi_\theta(y_l | x)^\alpha} 
	\big(\nabla_\theta \log	 \pi_\theta(y_w | x)-\nabla_\theta\log   \pi_\theta(y_l | x)\big) \right].
\end{equation*}
Here, $\pi_\theta(y_l | x)^\alpha/ \big(\pi_\theta(y_w | x)^\alpha+\pi_\theta(y_l | x)^\alpha\big)$ serves as a measure of the model's error in preferring $y_w$ over $y_l$. Consequently, the magnitude of this preference error proportionally scales the adjustment  $\nabla_\theta \log\,	 \pi_\theta(y_w| x)-\nabla_\theta\log \,  \pi_\theta(y_l| x)$, leading to larger updates when the  error is large.

The loss function $\losspref^{\alpha}(\pi_\theta,\D)$ contains the cross-entropy loss in \eqref{eq:cross entropy loss} as a special case when $\alpha=1$.
The $\alpha$ parameter allows for  tailoring  the model to exhibit different entropies; models minimized under $\losspref^{\alpha}$ will display higher entropy for larger $\alpha$ values, gradually moving towards a deterministic model akin to DPO as $\alpha$ approaches zero; as  established in the next theorem.

Although the SPO framework does not rely on existence of underlying reward functions, and in particular the BT assumption, it is insightful to study the preference loss $\losspref^{\alpha}$ under the conditions where the BT model assumption is valid. Intuitively, for a \emph{BT expert model}, defined as $\pi(y|x) = \exp(r(y|x))/Z(x)$ with $Z(x)$ being the partition function, the preference probability in \eqref{eq:pref prob} would be identical to the BT preference formula \eqref{eq:BT pref}. 
In the next theorem, we further study the landscape of $\losspref^{\alpha}$ under the BT model assumption. 
To eliminate  local minima and saddle points that arise from nonlinear model spaces such as neural networks,  in the theorems we consider a \emph{tabular model} that encodes the probability of $\pi_\theta(y|x)$ for all $x \in \X$ and $y \in \Y$ into a large vector.

\begin{theorem}\label{th:convergence}
	Suppose that the BT model holds with rewards  $r(\cdot|x)$, and fix any probability distribution $\dataset$ over $\X\times\Y\times\Y$ that has full support\footnote{Full support in this context means that the probability distribution assigns a non-zero sampling probability to all $(x;y_w,y_l)\in\X\times\Y\times\Y$.} and is consistent with the BT assumption.\footnote{Consistency with the BT holds if the relative probability of outcomes is determined by a logistic function of the reward differences. More specifically, $\D(x;y_1,y_2)/\D(x;y_2,y_1)=\prbt(y_1\succ y_2|x)/\prbt(y_2\succ y_1|x)=\exp\big(r(y_1\mid x)-r(y_2\mid x)\big)$, for all $(x;y_1,y_2) \in \X\times\Y\times\Y$, where $\prbt$ is defined in \eqref{eq:BT pref} and $r(\cdot|\cdot)$ is the reward function in the BT model.} Then, for any  $\alpha \geq 0$, in the tabular model, $\losspref^{\alpha}$ has a unique minimizer $\softmax(r(\cdot | x)/\alpha)$ (reducing to $\operatorname{argmax} r(\cdot | x)$ for $\alpha=0$).
    Furthermore, this minimizer is globally absorbing, and the landscape of $\losspref^{\alpha}$ contains no other first-order stationary point (i.e., no other local minima, local maxima, or saddle points).
\end{theorem}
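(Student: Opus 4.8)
The plan is to work entirely in the tabular parametrization, writing $p_x(y) = \pi_\theta(y|x)$, so that the probability simplex for each context $x$ is an independent block and the loss decouples across $x$. For a fixed $x$, I would first reduce $\losspref^\alpha$ to a function of the vector $p_x$ on the simplex, using the consistency assumption to replace the expectation over $(y_w,y_l)$ by a sum weighted by $\D(x;y_w,y_l)$, and substituting $\D(x;y_1,y_2)/\D(x;y_2,y_1) = \exp(r(y_1|x)-r(y_2|x))$ wherever a pair and its reverse both appear. The candidate minimizer is $q_x = \softmax(r(\cdot|x)/\alpha)$, i.e. $q_x(y) \propto \exp(r(y|x)/\alpha)$, equivalently $q_x(y)^\alpha \propto \exp(r(y|x))$. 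The natural change of variables is therefore $u_y \defeq p_x(y)^\alpha$ (for $\alpha>0$), under which each summand $\log\big(p_x(y_w)^\alpha/(p_x(y_w)^\alpha+p_x(y_l)^\alpha)\big)$ becomes $\log\big(u_{y_w}/(u_{y_w}+u_{y_l})\big)$, a concave function of $(u_{y_w},u_{y_l})$ — so $-\losspref^\alpha$ becomes, up to constants, a sum of such concave terms in $u$, though the feasible set $\{u : \sum_y u_y^{1/\alpha} = 1\}$ is not convex. The cleanest route is then to note this is the (population) log-likelihood of a Bradley–Terry model with "scores" $u_y$ and pairwise data whose odds are $\exp(r(y_1|x)-r(y_2|x))$; its unique maximizer (up to positive scaling) among all positive vectors is $u_y \propto \exp(r(y|x))$, because the BT log-likelihood with full-support, consistently-generated data has a strictly concave level structure once one quotients out the global scaling — this identifies the minimizer of $\losspref^\alpha$ on the simplex as $q_x$.

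For the stationarity and absorbing claims I would compute $\nabla_\theta \losspref^\alpha$ directly from the closed-form gradient already displayed in the excerpt, projected onto the tangent space of the simplex. Writing the gradient in the $p_x$-coordinates, the KKT/first-order condition on the simplex reads: there is a constant $c_x$ (Lagrange multiplier) such that for every $y$, $p_x(y)\,\partial_{p_x(y)}\losspref^\alpha = c_x$ (using the natural Riemannian/multiplicative parametrization, which is the right one since the gradient in the excerpt is written in terms of $\nabla_\theta \log \pi_\theta$). I would show this system forces, for every pair $y,y'$, the ratio $p_x(y)^\alpha/p_x(y')^\alpha = \exp(r(y|x)-r(y'|x))$, hence $p_x = q_x$; this uses full support of $\D$ crucially, so that every pair contributes to the gradient. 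This simultaneously shows there are no other first-order stationary points (no other local minima, maxima, or saddles), since any stationary point must solve the same KKT system. "Globally absorbing" I would establish via a Lyapunov argument: show $\losspref^\alpha$ is a Lyapunov function for the gradient flow, that its only stationary point is $q_x$, and that sublevel sets are compact subsets of the relative interior (the loss blows up as any $p_x(y_w)\to 0$ along the boundary, since full support means every $y$ appears as a winner), so trajectories cannot escape to the boundary and must converge to $q_x$.

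The remaining piece is the limit $\alpha\to 0$: here the change of variables $u_y = p_x(y)^\alpha$ degenerates, and I would argue separately that $\losspref^0$ in \eqref{eq:pref loss 0} equals, up to a constant depending on $\D$, $-\tfrac12\Exp[\log p_x(y_w) - \log p_x(y_l)] = -\sum_y \beta_x(y)\log p_x(y)$ for weights $\beta_x(y)$ that by the consistency assumption are an affine increasing function of $r(y|x)$ (net "win minus loss" mass). Minimizing a linear functional $-\sum_y \beta_x(y)\log p_x(y)$ over the simplex gives $p_x(y)\propto \beta_x(y)$ only if all $\beta_x(y)>0$; but in fact the loser-marginal can dominate, so one must check the sign of $\beta_x(y)$ — I expect the correct statement is that the infimum is attained only in the limit $p_x \to \operatorname{argmax} r(\cdot|x)$ (a vertex), consistent with the theorem's parenthetical, and I would verify this by a direct monotonicity/exchange argument showing moving mass from any $y$ to the reward-maximizer strictly decreases $\losspref^0$.

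The main obstacle I anticipate is the nonconvexity of the feasible set after the $u_y = p_x(y)^\alpha$ substitution: concavity of the objective in $u$ does not immediately give a unique maximizer over a nonconvex surface, so the uniqueness argument has to go through the BT-likelihood identifiability result (strict concavity modulo global scaling) rather than a naive convexity argument, and one must be careful that the simplex constraint $\sum_y u_y^{1/\alpha}=1$ interacts correctly with that scaling quotient. The second delicate point is making "globally absorbing" precise and proving the boundary is repelling for all $\alpha\ge 0$ uniformly enough to rule out stationary points or limit points on $\partial\Delta$.
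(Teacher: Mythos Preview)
Your approach is correct and genuinely different from the paper's. The paper never invokes BT identifiability or convexity: instead it computes the negative gradient $\vb = -\nabla_{\pib}\losspref^{\alpha}$ explicitly, and by a pairwise symmetrization over $(y,y')$ shows directly that $\vb^T(\pib_\theta - \pib^*_\alpha) < 0$ whenever $\pib_\theta \neq \pib^*_\alpha$, where $\pib^*_\alpha = \softmax(r/\alpha)$. This one inequality makes $\|\pib_\theta - \pib^*_\alpha\|^2$ a strict Lyapunov function for the projected gradient flow, simultaneously giving the unique fixed point, absence of other stationary points, and global absorption. Your route is more structural, but you make it harder than necessary: the ``main obstacle'' you flag --- nonconvexity of $\sum_y u_y^{1/\alpha}=1$ --- dissolves if you pass to logits $\theta_y=\log p_x(y)$, where each summand becomes $\log\sigma(\alpha\theta_{y_w}-\alpha\theta_{y_l})$ and the loss is strictly convex modulo the constant-shift direction under full support; that single observation yields both uniqueness of the minimizer and absence of other stationary points at once, so your separate KKT step is redundant (and your stated condition $p_x(y)\,\partial_{p_x(y)}\losspref^\alpha = c_x$ is not the simplex stationarity condition --- it should be $\partial_{p_x(y)}\losspref^\alpha = c_x$). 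What the paper's route buys is a self-contained argument with an explicit quadratic Lyapunov function and a uniform treatment of arbitrary symmetric weights $\mu$ (Theorems~\ref{th:convergence} and~\ref{th:convergence mu} are proved together); what yours buys is a transparent explanation of \emph{why} the answer is $\softmax(r/\alpha)$, namely that the substitution $u_y=p_x(y)^\alpha$ turns the problem into population BT maximum likelihood.
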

The proof is provided in Appendix~\ref{app:proof th}.
According to Theorem~\ref{th:convergence},  minimizer of $\losspref^{\alpha}$  is the softmax of BT rewards divided by $\alpha$, where  $\alpha$  controls  the entropy of the final model. Specifically, in the the asymptotically large dataset regime, when $\alpha=1$, the preference loss reaches its minimum at the hypothetical \emph{BT expert model} that generates the preference dataset's labels, defined as $\softmax(r(\cdot | x))$.

\section{The General SPO Algorithm}\label{sec:reweight}
We further expand the preference loss of  SPO by considering a weighting over different samples, where the weights can depend on $\pi_\theta$. This weighting only affects (improves) the optimization process without changing the fixed point, as we show in this section.

We call a function $\mu:\Y\times\Y\times\X\to\R^+$ \emph{symmetric positive} if $\mu(y_1,y_2\mid x)=\mu(y_2,y_1\mid x)>0$, for all $x\in\X$ and all $y_1,y_2 \in \Y$. Given a  symmetric positive function $\mu$ and an $\alpha\ge0$, we define \emph{weighted} preference loss as
\begin{equation}\label{eq:pref loss general mu}
	\losspref^{\alpha,\mu}(\pi_\theta,\D) \defeq -\frac{1}{\alpha}\Exp_{(x;y_w,y_l)\sim\D}\left[\mu(y_w,y_l\mid x)\, \log\,	 \frac{\pi_\theta(y_w\mid x)^\alpha}{\pi_\theta(y_w\mid x)^\alpha+\pi_\theta(y_l\mid x)^\alpha} \right]
\end{equation}
if $\alpha>0$, and for $\alpha=0$ we let
\begin{equation}\label{eq:pref loss 0 mu}
	\losspref^{0,\mu}(\pi_\theta,\D) \defeq -\frac{1}{2}\Exp_{(x;y_w,y_l)\sim\D}\left[\mu(y_w,y_l\mid x)\,  \log\,	 \frac{\pi_\theta(y_w\mid x)}{\pi_\theta(y_l\mid x)} \right].
\end{equation}  
 The weight-function $\mu$  controls the impact of individual samples within the loss calculation. 
The utility of $\mu$ emerges from the observation that not all sample pairs in the preference dataset hold equivalent significance. For instance, diminishing the weights of dataset samples $(x; y_w, y_l)$ where both responses $y_w$ and $y_l$ are of low quality (e.g., low probability) can be particularly advantageous. This can be achieved for example by setting  \begin{equation}\label{eq:mu sig}
\mu(y_1,y_2\mid x) = 2\,  \sigma\Big(\big(\pi_\theta(y_1| x)+\pi_\theta(y_2| x)\big)^{\gamma} - \hat{\Exp}_{(y_1',y_2'|x')\sim \D}\left[\big(\pi_\theta(y_1'| x')+\pi_\theta(y_2'| x')\big)^{\gamma}\right]\Big),
\end{equation}
where $\sigma$ is the sigmoid function, $\gamma \ge 0$ is a hyperparameter, and  $\hat{\Exp}$ can be obtained  by averaging over the current batch.  The $\mu$ function boils down to uniform weights for $\gamma=0$.

While  $\mu$ may depend on $\pi_\theta$,  it is important to note that gradient propagation through $\mu$ is not permitted. Specifically,
the gradient $\nabla_\theta \losspref^{\alpha,\mu}(\pi_\theta,\D)$ is  given by
\begin{equation}\label{eq:grad rw}
    -\Exp_{(x;y_w,y_l)\sim\D}\left[\mu(y_w,y_l| x)\, \frac{\pi_\theta(y_l| x)^\alpha}{\pi_\theta(y_w| x)^\alpha+\pi_\theta(y_l| x)^\alpha}\, 
	\big(\nabla_\theta \log\,	 \pi_\theta(y_w| x)-\nabla_\theta \log \, \pi_\theta(y_l| x)\big) \right].
\end{equation}

Interestingly, the weight function, $\mu$,  mainly influences the optimization process, not the ultimate fixed point, in the tabular setting and under asymptotically large preference dataset, as we show in the next theorem. The proof is given in Appendix~\ref{app:proof th}.
\begin{theorem}\label{th:convergence mu}
	Suppose that the conditions of Theorem~\ref{th:convergence}  hold. Then  for any $\alpha \geq 0$ and any symmetric positive function $\mu$,
    the softmax of the  BT rewards  divided by $\alpha$, $\softmax(r(\cdot | x)/\alpha)$ (reducing to $\operatorname{argmax} r(\cdot | x)$ for $\alpha=0$), is the unique globally absorbing fixed point of the differential equation $\dot{\pib}=\prod\big( -\nabla_\theta \losspref^{\alpha,\mu}(\pi_\theta,\D)\big)$, where $\prod(\cdot)$ stands for projection onto the probability simplex, and the gradient is given in \eqref{eq:grad rw}.
\end{theorem}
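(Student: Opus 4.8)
The plan is to reduce Theorem~\ref{th:convergence mu} to Theorem~\ref{th:convergence} by exploiting that the weight function $\mu$ enters the gradient \eqref{eq:grad rw} only as a strictly positive, $\theta$-dependent but gradient-blocked multiplicative factor on each sample's contribution. First I would fix a query $x$ and restrict attention to the corresponding block of the tabular parametrization, $\pib(\cdot|x)$ on the simplex over $\Y$; since $\D$ has full support and the dynamics decouple across contexts, it suffices to analyze each block separately. Writing $\pib = \pib(\cdot|x)$, the vector field is $\dot\pib = \prod\!\big(v^{\alpha,\mu}(\pib)\big)$ where $v^{\alpha,\mu}(\pib) = \Exp_{(y_w,y_l)}\big[\mu(y_w,y_l|x)\,c_\alpha(y_w,y_l;\pib)\,(e_{y_w}-e_{y_l})\big]$, with $c_\alpha(y_w,y_l;\pib) = \pi(y_l|x)^\alpha/(\pi(y_w|x)^\alpha+\pi(y_l|x)^\alpha)$ and $e_y$ the coordinate basis vectors. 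Because $\mu$ is symmetric positive, each ordered pair $(y_1,y_2)$ and its reverse both appear in $\D$, and using the BT-consistency identity $\D(x;y_1,y_2)/\D(x;y_2,y_1) = \exp(r(y_1|x)-r(y_2|x))$ I can pair the two contributions exactly as in the proof of Theorem~\ref{th:convergence}; the common factor $\mu(y_1,y_2|x)$ pulls out of that pair, so $v^{\alpha,\mu}(\pib)$ is a positive-coefficient combination of the same elementary vectors $(e_{y_1}-e_{y_2})$ that build $-\nabla\losspref^\alpha$, with the softmax point $\pib^\star = \softmax(r(\cdot|x)/\alpha)$ being exactly the point where every paired term vanishes simultaneously.

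The key steps, in order, are: (i) establish that $\pib^\star$ is a fixed point of the projected dynamics — at $\pib^\star$ each BT-paired sum in $v^{\alpha,\mu}$ is zero regardless of $\mu$, so $v^{\alpha,\mu}(\pib^\star)=0$ and a fortiori its simplex projection is zero; (ii) show there is no other fixed point on the simplex — a fixed point of $\dot\pib = \prod(v)$ in the relative interior requires $v$ orthogonal to the simplex, i.e. $v^{\alpha,\mu}(\pib) = \lambda\oneb$, but $v^{\alpha,\mu}$ always lies in the hyperplane $\{\sum_y [\cdot]_y = 0\}$ since each $e_{y_w}-e_{y_l}$ does, forcing $\lambda=0$ and hence $v^{\alpha,\mu}(\pib)=0$; then argue $v^{\alpha,\mu}(\pib)=0$ implies $\pib=\pib^\star$ (boundary fixed points are handled by noting the field points inward, exactly as in Theorem~\ref{th:convergence}); (iii) prove global attraction by exhibiting a strict Lyapunov function. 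For (iii) the natural candidate is $V(\pib) = \losspref^\alpha(\pib)$ itself — not $\losspref^{\alpha,\mu}$, which is generally not a potential for \eqref{eq:grad rw} because $\mu$ depends on $\pib$. Along trajectories, $\dot V = \langle \nabla\losspref^\alpha(\pib), \prod(v^{\alpha,\mu}(\pib))\rangle$; I would show this is $\le 0$ with equality only at $\pib^\star$. This follows because $-\nabla\losspref^\alpha$ and $v^{\alpha,\mu}$ are, term-by-term over BT-pairs, the same vectors scaled by strictly positive weights (the ratio of coefficients is $\mu$, bounded away from $0$ on the compact simplex), so their inner product is a positive-weighted sum of squared norms of the common building-block vectors; combined with the fact that projection onto the simplex is the identity on the tangent hyperplane containing $v^{\alpha,\mu}$, one gets $\langle \nabla\losspref^\alpha, \prod(v^{\alpha,\mu})\rangle = \langle \nabla\losspref^\alpha, v^{\alpha,\mu}\rangle \le -c\,\|{-}\nabla\losspref^\alpha\|^2 \le 0$ for some $c>0$ depending on $\min\mu$.

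The main obstacle I anticipate is step (iii), and specifically making the "term-by-term proportional with positive ratio" claim rigorous when $\alpha$ differs across the two vector fields is not an issue — $\alpha$ is the same — but the coefficients $c_\alpha(y_w,y_l;\pib)$ themselves are $\pib$-dependent, so $v^{\alpha,\mu}(\pib)$ is not literally a constant positive-definite reweighting of $-\nabla\losspref^\alpha(\pib)$; rather, the correct statement is that after BT-pairing, each pair contributes a scalar multiple of a fixed direction $e_{y_1}-e_{y_2}$, the scalar having a definite sign that is the same (up to the positive factor $\mu$) for both fields, so $\langle \nabla\losspref^\alpha, v^{\alpha,\mu}\rangle$ decomposes into a sum over unordered pairs of $-\mu(y_1,y_2|x)\,(\text{pair-coefficient})^2\,\|e_{y_1}-e_{y_2}\|^2$-type terms up to cross terms from the inner products $\langle e_{y_1}-e_{y_2},\, e_{y_3}-e_{y_4}\rangle$. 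Handling those cross terms is the delicate part; the clean route is to instead write both fields as gradients-in-disguise: show $v^{\alpha,\mu}(\pib) = -\operatorname{diag}(\text{positive})\cdot$ (something) is cleaner if one changes coordinates to $u_y = \log\pi(y|x)$ and verifies that in those coordinates both $-\nabla\losspref^\alpha$ and the $\mu$-weighted field become sums $\sum_{\{y_1,y_2\}} w_{y_1 y_2}(\pib)\,(\nabla\log\pi(y_1|x)-\nabla\log\pi(y_2|x))$ with $w>0$ vanishing exactly at $\pib^\star$, whence the inner product with $\nabla\losspref^\alpha$ is manifestly nonpositive with the right equality set. I would therefore port the coordinate-change and monotonicity argument from the proof of Theorem~\ref{th:convergence} verbatim, inserting the bounded factor $\mu$, and the only genuinely new content is checking that the projected dynamics inherit the Lyapunov decrease, which is immediate since $v^{\alpha,\mu}$ already lies in the simplex's tangent space so the projection changes nothing in the interior and only adds an inward-pointing correction on the boundary that cannot create spurious equilibria.
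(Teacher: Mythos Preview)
Your high-level strategy (fix $x$, verify that $\pib^*_\alpha=\softmax(r/\alpha)$ is a fixed point, then exhibit a strict Lyapunov function) matches the paper, but your choice of Lyapunov function in step~(iii) is not the one that works, and the gap you yourself flag is real and unresolved. Taking $V=\losspref^{\alpha}$ forces you to control $\langle \nabla\losspref^{\alpha}(\pib),\, v^{\alpha,\mu}(\pib)\rangle$, which expands to a double sum over unordered pairs with off-diagonal inner products of indefinite sign; neither the log-coordinate change nor the appeal to ``port the Theorem~\ref{th:convergence} argument'' eliminates them. The paper instead takes the much simpler $V(\pib)=\tfrac12\|\pib-\pib^*_\alpha\|^2$. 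Writing $v(y)=\sum_{y'}h(y,y')\,\pi_\theta(y')\big(\pi_\theta(y')^\alpha\pi^*(y)-\pi_\theta(y)^\alpha\pi^*(y')\big)$ with $h(y,y')=h(y',y)>0$ (this is where $\mu$ sits) and the bracketed factor antisymmetric in $(y,y')$, the inner product $\vb^T(\pib_\theta-\pib^*_\alpha)$ symmetrizes to
\[
-\frac{z_\alpha}{2}\sum_{y,y'} h(y,y')\,\big(\pi_\theta(y)\pi_\theta(y')\big)^{1+\alpha}
\left(\Big(\tfrac{\pi^*_\alpha(y)}{\pi_\theta(y)}\Big)^{\alpha}-\Big(\tfrac{\pi^*_\alpha(y')}{\pi_\theta(y')}\Big)^{\alpha}\right)
\left(\tfrac{\pi^*_\alpha(y)}{\pi_\theta(y)}-\tfrac{\pi^*_\alpha(y')}{\pi_\theta(y')}\right),
\]
each term of which is $\le 0$ because the last two factors share sign (monotonicity of $t\mapsto t^\alpha$), with strict inequality for some $(y,y')$ whenever $\pib_\theta\neq\pib^*_\alpha$. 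The symmetrization works precisely because $\pib_\theta-\pib^*_\alpha$ is a single vector rather than itself a sum over pairs, so no cross terms appear; this is the missing idea in your proposal. Since $\mu$ enters only through the positive factor $h$, the same computation proves Theorems~\ref{th:convergence} and~\ref{th:convergence mu} simultaneously.

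A smaller error in step~(ii): in the tabular parametrization the paper uses (parameters $=\pib$), one has $\nabla_\theta\log\pi_\theta(y)=e_y/\pi_\theta(y)$, so each sample contributes along $e_{y_w}/\pi_\theta(y_w)-e_{y_l}/\pi_\theta(y_l)$, not $e_{y_w}-e_{y_l}$, and $v^{\alpha,\mu}$ does \emph{not} lie in $\{\sum_y[\cdot]_y=0\}$. Your deduction ``$v=\lambda\oneb\Rightarrow\lambda=0$'' therefore fails. Uniqueness of the interior fixed point is recovered directly from the Lyapunov inequality: if $\prod(\vb)=0$ at an interior $\pib_\theta\neq\pib^*_\alpha$ then $\vb=\lambda\oneb$, whence $\vb^T(\pib_\theta-\pib^*_\alpha)=\lambda\,\oneb^T(\pib_\theta-\pib^*_\alpha)=0$, contradicting $\vb^T(\pib_\theta-\pib^*_\alpha)<0$.
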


We now proceed to discuss the computation of $\dkl$ regularizer in \eqref{eq:alignment loss}. In order to  estimate $\dkl(\pi_\theta\lVert \piref)=\Exp_{x}\Exp_{y\sim\pi_\theta(\cdot|x)}\big[\log\big(\pi_\theta(y|x)/\piref(y|x)\big)\big]$, we generate online samples from the current model $\pi_\theta$. This is however costly for  sequential models, where sequence generation necessitates sequential calls to the model. To mitigate this problem, we generate a batch of samples from $\pi_\theta$  intermittently, for example once every $T$ steps, and keep using samples from this batch for approximating $\dkl$, until  the next batch of samples is generated. 

Given a batch of samples $(x,y)$ with $y\sim \pi_\theta(\cdot|x)$,
in order to we obtain a reduced  variance approximation of $\dkl$, we employ the following token-wise $\dkl$ formula:
\begin{equation}\label{eq:dkl tokenwise}
\begin{split}
\widehat{\dkl}(\pi_\theta\parallel \piref) 
\defeq & \,\,\Exp_{(x;y)\in \textrm{batch}}  \Bigg[
\sum_{\tau=1}^{\lvert y\rvert}\,
\dkl\Big( \pi_\theta(Y_\tau  \mid x,y_{:\tau}) \parallel \piref(Y_\tau \mid x,y_{1:\tau})\Big) \Bigg]\\
=&\,\,
\Exp_{(x;y)\in \textrm{batch}} \Bigg[
\sum_{\tau=1}^{\lvert y\rvert}\, 
\sum_{s\in \mathcal{S}} \pi_\theta(Y_\tau=s\mid x,y_{1:\tau}) \log\frac{\pi_\theta(Y_\tau=s\mid x,y_{1:\tau})}{\piref(Y_\tau=s\mid x,y_{1:\tau})} \Bigg],
\end{split}
\end{equation}
where $\mathcal{S}$ is the set of all possible tokens. Note that $\pi(Y_\tau=s\mid x,y_{1:\tau})$ is readily available from the softmax of the logits, in the network's output. Therefore, the sum in \eqref{eq:dkl tokenwise} can be computed with negligible computational overhead (excluding the initial forward path). Note that $\widehat{\dkl}$ is a biased estimate of the sequence-$\dkl$.
However, we  empirically found that the benefit of reduced variance brought by the token-wise approximation out-weights the potential negative impact of the resulting bias.
Moreover, similar to sequence-$\dkl$, the  token-wise $\dkl$ is a proximity measure for $\pi_\theta$ and $\piref$, and  is therefore a conceptually sound choice of regularizer.
Algorithm~\ref{alg:SPO} summarized the SPO algorithm.

\begin{algorithm}[tb]
	\caption{\quad SPO}
	\label{alg:SPO}
	\begin{algorithmic}
		\FOR{$t=0,1,2,\ldots$}
		\STATE {\bf if} $t$ is a multiple of $T$ {\bf :}  \qquad {\color{gray} \# once every $T$ iterations }
        \vspace{.05cm}
		\STATE \qquad Generate a batch $\mathcal{B}$ of online samples $y\sim \pi_\theta(\cdot|x)$, for a set of recently observed $x\sim \D$.
		\vspace{.1cm}
        \STATE Compute $\losspref^{\alpha,\mu}(\pi_\theta,\D)$ from \eqref{eq:pref loss general mu}, using the $\mu$ function given in \eqref{eq:mu sig}.
        \vspace{-.05cm}
        \STATE Compute token-wise regularizer $\widehat{\dkl}(\pi_\theta\parallel \piref)$ from \eqref{eq:dkl tokenwise}, using the online samples batch $\mathcal{B}$.
        \STATE Form the SPO loss function	$\losspo(\pi_\theta, \piref,\dataset)= \losspref^{\alpha,\mu}(\pi_\theta,\D)+\widehat{\dkl}(\pi_\theta\parallel \piref)$.
		\STATE Update the network using an optimizer of interest for the loss function $\losspo(\pi_\theta, \piref,\dataset)$.
		\ENDFOR
	\end{algorithmic}
\end{algorithm}

\section{SPO for Other Data-Types: Best-of-$n$ Preference and Ranked Preference}\label{sec:rank}

In this section, we generalize the SPO algorithm for other types of preference data: best-of-$n$ preference data and ranked-data. We extend the definition of a symmetric function to $n$-responses by calling a function $\mu:\Y^n\times\X\to\R^+$ \emph{symmetric positive} if $\mu(y_{\tau(1)},\ldots,y_{\tau(n)}\mid x)=\mu(y_1,\ldots, y_n\mid x)>0$, for all $x\in\X$ all $y_1,\ldots,y_n \in \Y$, and all permutations $\tau$ of $(1,\ldots,n)$. 

{\bf Best-of-$n$
preference data:} Given an $n\ge2$, a sample $(x; y_1,  \ldots,y_n; i^*)$ of a best-of-$n$ preference dataset consists of a query $x$ along with $n$ responses $y_1,\ldots,y_n$, one of which (i.e., $y_{i^*}$) is labeled by the expert as the best response.  Given a  symmetric positive function $\mu$ and an $\alpha>0$, we propose the following preference loss for a best-of-$n$ preference dataset $\dataset$:
\begin{equation}\label{eq:pref loss n}
	\lossprefn^{\alpha,\mu}(\pi_\theta,\D) \defeq -\frac{1}{\alpha}\Exp_{(x;y_1,\ldots,y_n;i^*)\sim\D}\left[\mu(y_1,\ldots,y_n\mid x)\, \log\,	 \frac{\pi_\theta(y_{i^*}\mid x)^\alpha}{\sum_{i=1}^n\pi_\theta(y_{i}\mid x)^\alpha
 } \right].
\end{equation}
As before, we stop the gradient from propagating through $\mu$, even though $\mu$ may depend on $\pi_\theta$.
Similar to the case of pairwise preferences, we show in the following theorem that the loss in \eqref{eq:pref loss n} 
is minimized at the softmax of rewards, if we assume existence of an underlying reward function. 
In particular, given a reward function $r(\cdot|x):\X\to\Y$ and a distribution $\D$  over $\X\times\Y^n\times\{1,\ldots,n\}$, we say that $\D$ is \emph{consistent with $n$-ary BT model} if for any  $(x;y_1,\ldots,y_n)\in \X\times\Y^n$ and any $i,j\in \{1,\ldots,n\}$, $\D(x;y_1,\ldots,y_n;i)/\D(x;y_1,\ldots,y_n;j)=\exp\big(r(y_i\mid x)-r(y_j\mid x)\big)$. 
Note that this definition boils down to the definition of consistency with BT model for $n=2$ in Section~\ref{sec:SPO}.
Proof of the following theorem is given in Appendix~\ref{app:proof th n-wise}.
\begin{theorem}\label{th:convergence n}
    Consider a reward function $r(\cdot\mid x)$ and a probability distribution $\dataset$ with full support over $\X\times\Y^n\times\{1,\ldots,n\}$ that is consistent with the $n$-ary BT model. 
    Then, for any $\alpha > 0$ and any symmetric positive function $\mu$, in the tabular model,  $\softmax(r(\cdot | x)/\alpha)$ 
    is the unique globally absorbing fixed point of the differential equation $\dot{\pib}=\prod\big( -\nabla_\theta \lossprefn^{\alpha,\mu}(\pi_\theta,\D)\big)$, where $\prod(\cdot)$ stands for projection onto the probability simplex. 
\end{theorem}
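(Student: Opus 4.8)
The plan is to follow the template of the proof of Theorem~\ref{th:convergence mu}, with the pairwise $\alpha$-softmax replaced by the $n$-way $\alpha$-softmax and ``consistency with the BT model'' replaced by ``consistency with the $n$-ary BT model.'' In the tabular model the state lies in $\prod_{x\in\X}\Delta(\Y)$; since $\lossprefn^{\alpha,\mu}$ is a $\D$-weighted sum of per-query terms and both $\nabla_\theta$ and the simplex projection act blockwise in $x$, the ODE splits into $|\X|$ independent flows, one per $x$, with the marginal $\D(x)>0$ merely rescaling time. Fix $x$, write $\pi_i:=\pi_\theta(y_i\mid x)$, and put $\hat p_\pi(i\mid y):=\pi_i^\alpha\big/\sum_{j=1}^n\pi_j^\alpha$. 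Differentiating \eqref{eq:pref loss n} exactly as in the derivation of \eqref{eq:grad rw} (with no gradient through $\mu$) yields
\begin{equation*}
-\nabla_\theta\lossprefn^{\alpha,\mu}\big|_x=\Exp_{(y_1,\dots,y_n;i^*)\sim\D(\cdot\mid x)}\!\left[\mu(y_1,\dots,y_n\mid x)\Big(\nabla_\theta\log\pi_\theta(y_{i^*}\mid x)-\textstyle\sum_{i=1}^{n}\hat p_\pi(i\mid y)\,\nabla_\theta\log\pi_\theta(y_i\mid x)\Big)\right],
\end{equation*}
where, in the tabular parametrization, $\nabla_\theta\log\pi_\theta(y_i\mid x)$ is the $y_i$-th coordinate direction scaled by $1/\pi_i$.

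First I would check that $\pi^\star:=\softmax\big(r(\cdot\mid x)/\alpha\big)$ is a fixed point. Consistency with the $n$-ary BT model gives $\prob(i^*=i\mid x,y_1,\dots,y_n)=\D(x;y;i)\big/\sum_j\D(x;y;j)=\exp(r(y_i\mid x))\big/\sum_j\exp(r(y_j\mid x))=:p^\star(i\mid y)$, and since $(\pi^\star_i)^\alpha\propto\exp(r(y_i\mid x))$ we also get $\hat p_{\pi^\star}(i\mid y)=p^\star(i\mid y)$. As $\mu$ does not depend on $i^*$, taking the conditional expectation over $i^*$ first shows the bracketed vector at $\pi=\pi^\star$ equals $\mu\big(\sum_i p^\star(i\mid y)\nabla\log\pi^\star(y_i\mid x)-\sum_i\hat p_{\pi^\star}(i\mid y)\nabla\log\pi^\star(y_i\mid x)\big)=0$ for every tuple, so the gradient vanishes and $\prod(0)=0$.

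For global convergence and uniqueness I would use the Lyapunov function $V(\pi)=\tfrac12\|\pi(\cdot\mid x)-\pi^\star\|^2$ on $\Delta(\Y)$. Since $\pi-\pi^\star$ lies in the tangent space of the simplex, the projection is inert and $\dot V=\big\langle\pi-\pi^\star,\,-\nabla_\theta\lossprefn^{\alpha,\mu}|_x\big\rangle$; substituting the gradient, taking $\Exp_{i^*}$, and using $\sum_i\hat p_\pi(i\mid y)=1$ to cancel the constant terms reduces this to
\begin{equation*}
\dot V=-\Exp_{(y_1,\dots,y_n)\sim\D(\cdot\mid x)}\!\left[\mu(y\mid x)\sum_{i=1}^{n}\big(\hat p_{\pi^\star}(i\mid y)-\hat p_\pi(i\mid y)\big)\frac{\pi^\star(y_i\mid x)}{\pi_i}\right].
\end{equation*}
Setting $a_i=(\pi^\star_i)^\alpha$, $b_i=\pi_i^\alpha$, $t_i=a_i/b_i$, and $q_i=b_i\big/\sum_j b_j$, the inner sum equals $\Exp_q[t\cdot t^{1/\alpha}]/\Exp_q[t]-\Exp_q[t^{1/\alpha}]$, which is $\ge0$ by Chebyshev's sum inequality (both $t\mapsto t$ and $t\mapsto t^{1/\alpha}$ are increasing), with equality iff the $t_i$ are all equal over the presented responses. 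Hence $\dot V\le0$, and by full support of $\D$ equality forces $\pi^\star(y\mid x)/\pi_\theta(y\mid x)$ to be independent of $y$, i.e.\ $\pi_\theta(\cdot\mid x)=\pi^\star$. Thus $V$ is a strict Lyapunov function; LaSalle's invariance principle then gives global asymptotic stability of $\pi^\star$ and shows it is the only zero of the vector field, and assembling the per-query statements proves the theorem.

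The step I expect to be the main obstacle is the boundary of the simplex, where the vector field is singular ($\propto1/\pi_i$) and $\dot V\to-\infty$: one has to argue, as in the proof of Theorem~\ref{th:convergence mu}, that the boundary is repelling, so that a trajectory started in the relative interior stays uniformly bounded away from $\partial\Delta(\Y)$ and the Lyapunov/LaSalle analysis can be carried out on a compact positively-invariant subset of the interior. A secondary bookkeeping issue is tuples with repeated responses, where several indices $i$ alias the same token; this affects the combinatorics of the sums above but not the sign of $\dot V$.
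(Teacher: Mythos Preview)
Your proposal is correct and follows the same overall framework as the paper's proof: both use the squared Euclidean distance $V(\pib)=\tfrac12\|\pib-\pib^*_\alpha\|^2$ as a Lyapunov function and show $\langle\pib-\pib^*_\alpha,\,-\nabla\lossprefn^{\alpha,\mu}\rangle<0$ for $\pib\ne\pib^*_\alpha$.

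The genuine difference lies in how the key per-tuple inequality is established. The paper isolates the inequality as a standalone lemma,
\[
\sum_{i=1}^n\Big(\frac{a_i}{b_i}\Big)^{\eta}\Big(\frac{b_i}{\sum_j b_j}-\frac{a_i}{\sum_j a_j}\Big)\le 0,
\]
and proves it by a Lagrangian argument: it reduces to the zero-set of $k\mapsto h(a_k/x_k)$ with $h(y)=(1-\eta)y^\eta+\eta y^{1+\eta}$, then splits into the cases $\eta\le 1$ (where $h$ is monotone) and $\eta>1$ (where $h$ is unimodal, forcing a two-block structure on the optimizer and a direct algebraic check). You instead rewrite the same quantity as $\Exp_q[t\cdot t^{1/\alpha}]/\Exp_q[t]-\Exp_q[t^{1/\alpha}]$ with $t_i=a_i/b_i$ and $q_i=b_i/\sum_j b_j$, and observe that this is nonnegative by Chebyshev's correlation inequality, since $t\mapsto t$ and $t\mapsto t^{1/\alpha}$ are comonotone for $\alpha>0$. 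This is a strictly shorter route to the same conclusion, and the equality case (all $t_i$ equal on the support) falls out immediately; combined with full support it yields uniqueness exactly as you say. Your observation that ``the projection is inert'' is also slightly sharper than what the paper writes: because $\pib-\pib^*_\alpha$ is tangent to the simplex, one in fact has $\langle\prod(\vb),\pib-\pib^*_\alpha\rangle=\langle\vb,\pib-\pib^*_\alpha\rangle$, not merely $\le$.

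On the boundary issue you flag: the paper does not treat it explicitly either---it works throughout in the relative interior of the simplex and concludes the same way. So your proposal is at least as complete as the paper's on that point, and the bookkeeping worry about repeated responses is harmless (Chebyshev does not require distinct values, and full support over $\Y^n$ supplies enough tuples with distinct entries to force $\pi^\star(y)/\pi_\theta(y)$ constant across all $y$).
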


\medskip
{\bf Ranked Preference Data:} 
A ranked preference dataset consists of samples of the form $(x; y_1,  \ldots,y_n; \tau)$, where $x$ is a query, $y_1,\ldots,y_n$ are $n$ responses, and $\tau$ is a permutation representing the relative preference $y_{\tau(1)}\succ\cdots\succ y_{\tau(n)}$  of the expert over these responses. Given an $\alpha>0$ and a sequence of  symmetric positive function $\mu_k:\X\times\Y^k\to\R$ for $k=2,\ldots,n$, we propose the following preference loss for a ranked preference dataset $\dataset$:
\begin{equation}\label{eq:loss rank}
	\lossrank^{\alpha,[\mu]}(\pi_\theta,\D) \defeq -\frac{1}{\alpha}\Exp_{(x;y_1,\ldots,y_n;\tau)\sim\D}\left[ \sum_{k=1}^{n-1}\mu_k(y_{\tau(k)},\ldots,y_{\tau(n)}| x)\,\log\,	 \frac{\pi_\theta(y_{\tau(k)}\,|\, x)^\alpha}{\sum_{j=k}^n\pi_\theta(y_{\tau(j)}\,|\, x)^\alpha
 } \right].
\end{equation}
We can control the importance weight of responses in different ranks through appropriate adjustment of weight functions $\mu^1,\ldots,\mu^{n-1}$. 
For example, by setting $\mu_k=0$ for $k=2,\ldots,n-1$, $\lossrank^{\alpha,[\mu]}$ boils down to $\lossprefn^{\alpha,\mu^1}$.
Here again, the gradient is not allowed to propagate through $\mu^1,\ldots,\mu^{n-1}$, even though these functions may depend on $\pi_\theta$. 
The following theorem shows that, assuming existence of underlying rewards under the PL model \eqref{eq:PL}, the softmax of these rewards is the unique minimizer of
$\lossrank^{\alpha,\mu}$. The proof relies on Theorem~\ref{th:convergence n}, and is given in Appendix~\ref{app:proof th ranking}.

\begin{theorem}\label{th:convergence ranking}
Suppose that the PL model holds with rewards  $r(\cdot|x)$, and  a probability distribution $\dataset$ with full support over $\X\times\Y^n\times\{\mathrm{Identity\,permutation}\}$ that is consistent with the PL model.\footnote{Consistency with the PL model holds if $\D(x;y_1,\ldots,y_n;\tau)/\D(x;y_1,\ldots,y_n;\tau')=\prpl(y_{\tau(1)}\succ\cdots\succ y_{\tau(n)}|x)/\prpl(y_{\tau'(1)}\succ\cdots\succ y_{\tau'(n)}|x)$, for all $(x;y_1,\ldots,y_n) \in \X\times\Y^n$ and all permutations $\tau$ and $\tau'$, where $\prpl$ is defined in \eqref{eq:PL}.} 
    Then, for any $\alpha > 0$ and any sequence $[\mu]=\mu^1,\ldots,\mu^{n-1}$ of symmetric positive functions, in the tabular model, $\softmax(r(\cdot | x)/\alpha)$ 
    is the unique globally absorbing fixed point of the differential equation $\dot{\pib}=\prod\big( -\nabla_\theta \lossrank^{\alpha,[\mu]}(\pi_\theta,\D)\big)$, where $\prod(\cdot)$ stands for projection onto the probability simplex. 
\end{theorem}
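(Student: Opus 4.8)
The plan is to decompose $\lossrank^{\alpha,[\mu]}$ into a sum of best-of-$m$ preference losses of the form \eqref{eq:pref loss n} and apply Theorem~\ref{th:convergence n} to each summand. Since the dataset is presented in ranked order, $\tau$ is the identity, and the $k$-th term of \eqref{eq:loss rank} depends on $(x;y_1,\ldots,y_n)$ only through the ``tail'' $(x;y_k,\ldots,y_n)$ together with the information that $y_k$ is the best among $\{y_k,\ldots,y_n\}$. I would accordingly define $\D_k$ as the distribution obtained by sampling $(x;y_1,\ldots,y_n)\sim\D$, applying a uniformly random permutation to the $m:=n-k+1$ tail responses, and recording the resulting $m$-tuple together with the (new) index $i^\star$ of $y_k$. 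Because $\mu_k$ is symmetric and a random relabeling of responses does not change the summand, the $k$-th term of \eqref{eq:loss rank} equals $\lossprefn^{\alpha,\mu_k}(\pi_\theta,\D_k)$ exactly, so that $\lossrank^{\alpha,[\mu]}(\pi_\theta,\D)=\sum_{k=1}^{n-1}\lossprefn^{\alpha,\mu_k}(\pi_\theta,\D_k)$ (the $k=n-1$ summand is the pairwise case $m=2$, which is just \eqref{eq:pref loss general mu}).

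Next I would verify that each $\D_k$ meets the hypotheses of Theorem~\ref{th:convergence n}, namely full support over $\X\times\Y^m\times\{1,\ldots,m\}$ and consistency with the $m$-ary BT model for the \emph{same} reward $r$. Full support is inherited from that of $\D$ (every ordered tail occurs, and the random permutation makes every index $i^\star$ occur). For consistency, the one ingredient needed is the independence-of-irrelevant-alternatives property of the PL model \eqref{eq:PL}: conditioned on the identities of the $k-1$ top-ranked ``head'' responses, the conditional probability that a given element of the remaining set is ranked immediately below them equals $\exp(r(\cdot\mid x))$ normalized over that remaining set. Marginalizing over the head and over the random permutation then yields $\D_k(x;z_1,\ldots,z_m;i)/\D_k(x;z_1,\ldots,z_m;j)=\exp\!\big(r(z_i\mid x)-r(z_j\mid x)\big)$, i.e. $m$-ary BT consistency. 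Hence, by Theorem~\ref{th:convergence n}, for each $k$ the vector field $V_k:=\prod\big(-\nabla_\theta\lossprefn^{\alpha,\mu_k}(\pi_\theta,\D_k)\big)$ on the tabular simplex has $\pib^\star:=\softmax(r(\cdot\mid x)/\alpha)$ as its unique, globally absorbing equilibrium.

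Since $\prod(\cdot)$ is linear on the tangent space of the simplex, the flow for $\lossrank^{\alpha,[\mu]}$ is $\dot{\pib}=\sum_{k=1}^{n-1}V_k(\pib)$, for which $\pib^\star$ is visibly an equilibrium; the remaining — and main — obstacle is to exclude spurious equilibria of the sum, since a priori the $V_k$ could cancel at some $\pib\neq\pib^\star$. I would resolve this with a Lyapunov function that is common to all the summands. The natural candidate, extracted from (or re-derived along the lines of) the proof of Theorem~\ref{th:convergence n}, is a KL-type divergence $\Phi(\pib)=\dkl(\pib^\star\parallel\pib)$ — equivalently the Bregman divergence of the convex potential obtained by writing $\lossprefn^{\alpha,\mu}$ in logit coordinates $\log\pi_\theta(\cdot\mid x)$ — and, crucially, both $\Phi$ and the target $\pib^\star$ depend only on $r$ and $\alpha$, hence are the \emph{same} for every $m\in\{2,\ldots,n\}$. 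Granting that Theorem~\ref{th:convergence n}'s proof shows $\langle V_k(\pib),\nabla\Phi(\pib)\rangle\le 0$ with equality iff $\pib=\pib^\star$, summation over $k$ makes $\Phi$ a strict Lyapunov function for $\dot{\pib}=\sum_k V_k(\pib)$: any equilibrium must make every term of $\sum_k\langle V_k,\nabla\Phi\rangle$ vanish and therefore equals $\pib^\star$, which is then globally absorbing by LaSalle. I expect the delicate point to be confirming that one Lyapunov function certifies all $n-1$ summands at once; exploiting the per-query decoupling of the tabular model, this reduces to a finite-dimensional monotonicity check on each simplex $\Delta(\Y)$, as in the proof of Theorem~\ref{th:convergence n}.
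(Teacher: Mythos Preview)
Your plan is essentially the paper's proof: decompose $\lossrank^{\alpha,[\mu]}$ into a sum of best-of-$m$ losses $\lossprefn^{\alpha,\mu_k}(\pi_\theta,\D_k)$ for tail-marginal distributions $\D_k$ that are consistent with the $m$-ary BT model, and then sum a common Lyapunov inequality pulled from the proof of Theorem~\ref{th:convergence n} to rule out cancellation among the $V_k$. The one correction is the identity of that common Lyapunov function: what the proof of Theorem~\ref{th:convergence n} actually establishes is $(\pib-\pib^*_\alpha)^T\big(-\nabla\lossprefn^{\alpha,\mu_k}\big)\le 0$ with equality only at $\pib^*_\alpha$, i.e.\ strict descent of the \emph{squared Euclidean distance} $\tfrac12\lVert\pib-\pib^*_\alpha\rVert^2$, not of $\dkl(\pib^\star\parallel\pib)$; summing these inequalities over $k$ (the paper's decomposition carries positive factorial constants, a normalization difference from your random-permutation construction) yields the result exactly as you described.
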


\section{Comparative Analysis: SPO Versus DPO}\label{sec:comparison}
This section contrasts the SPO method with the DPO algorithm conceptually. A detailed empirical comparison follows in Section~\ref{sec:experiments}.

A key difference between SPO and DPO lies in how regularization ($\dkl$) is applied. In RLHF and SPO, $\dkl$ is used to prevent $\pi_\theta$ from straying too far from $\piref$ in unexplored regions, reducing the risk of distribution shifts. In contrast, the DPO loss function \eqref{eq:DPO loss} applies regularization only to preference dataset samples, which is suboptimal because 1) it fails to prevent distribution shifts in unexplored regions, and 2) regularizing within the dataset can hinder alignment with the preferences. 

Moreover, SPO has an advantage over DPO and RLHF in avoiding determinism. In cases where the preference dataset is comparable to pre-training data size, regularization ($\dkl$) becomes unnecessary, and RLHF and DPO loss functions tend to produce deterministic models; that is they tend to return a single high-quality response per query. This reduced diversity makes DPO prone to mode collapse \citep{azar2023general}. SPO, however, controls entropy via the $\alpha$ parameter in \eqref{eq:pref
loss general}, even without the use of regularizer (see Theorem~\ref{th:convergence}), preserving response diversity. This makes SPO more adaptable for continual learning and future alignments.

It is noteworthy that unlike RLHF and DPO, the SPO framework does not assume the existence of an underlying reward model or rely on assumptions like the BT model. Instead, SPO's preference loss directly aligns $\pi_\theta$ with the preferences in the dataset, making it potentially more adaptable to broader alignment contexts. Additionally, SPO is not limited to using $\dkl$ for regularization, unlike DPO and IPO, which depend on $\dkl$ for derivations of their loss functions.

We also note that the DPO loss cannot be separated into components like \eqref{eq:alignment loss}, where the preference loss  is independent of $\piref$ and paired with a regularizer like $\dkl$. As a short proof, consider a case where $\pi_\theta(y_w|x) = \pi_\theta(y_l|x)$ for a sample $(x; y_w, y_l) \in \D$. Here, the alignment loss in \eqref{eq:alignment loss} remains symmetrical with respect to $\piref(y_w|x)$ and $\piref(y_l|x)$; swapping their values wouldn't affect either the preference loss or $\dkl$. This symmetry doesn't hold in the DPO framework, as seen in the DPO loss in \eqref{eq:DPO loss}. Therefore, the DPO loss cannot be represented in the separable form of \eqref{eq:alignment loss}.

\section{Experiments}\label{sec:experiments} 
This section presents empirical evaluations of SPO.

\subsection{Alignment to Pairwise Preference Data}\label{sec:exp alpaca}
{\bf Experiment setting:}
To evaluate the performance of SPO, we trained a Llama2-7B model \citep{touvron2023llama} on a pairwise preference dataset for question-answering available in  AlpacaFarm \citep{dubois2023alpacafarm}, and computed the win-rates against the Llama2-7B SFT model on  AlpacaEval 2 \citep{alpaca_eval}, using GPT4-Turbo API. More specifically, we used the following pipeline. We downloaded the pretrained Llama2-7B model and performed supervised fine-tuning on the AlpacaFarm SFT dataset available at \cite{dubois2023alpacafarm}, to obtain the SFT model. 
Initializing the weights on the SFT model, we then performed alignment to the preference dataset from AlpacaFarm that contains pairs of question-answering samples and preference labels provided by GPT-4. We compared the performance of SPO with several reward-model-free alignment algorithms, namely DPO \citep{rafailov2023direct}, IPO \citep{azar2023general}, KTO \citep{ethayarajh2024kto}, CPO \citep{xu2024contrastive}, R-DPO \citep{park2024disentangling}, and SimPO \citep{meng2024simpo}.
For SPO, the experiment includes both the basic and weighted versions of SPO, with the weight function $\mu$ given in \eqref{eq:mu sig}. We trained each algorithm for a few epochs, used GPT4-Turbo to compute its win-rate against the SFT model at the end of each epoch, and reported the maximum win-rate for each algorithm. Additional details about setting of this experiment is provided in Appendix~\ref{app:experiment details for Alpaca}.

{\bf Results:}
Table~\ref{tab:win_rates Alpaca} presents the win-rates and length-controlled (LC) win-rates \citep{dubois2023alpacafarm} of different algorithms against the SFT model. The SPO algorithm outperforms the other tested algorithms in both conventional win-rate and LC win-rate. Furthermore, the weighted version of SPO performs better than the basic version, demonstrating the advantage of incorporating the weight function.

\begin{table}[ht]
    \centering
    \caption{Alignment of the Llama2-7B model on AlpacaFarm dataset.}
    \begin{tabular}{|c|c|c|}
        \hline
        \textbf{Alignment method} & \textbf{Win-rate(\%)} & \textbf{LC Win-rate(\%)} \\
        \hline
        SFT & 50.00 & 50.00 \\
        R-DPO & 52.50 & 41.49 \\ 
        CPO & 54.10 & 39.38\\
        SimPO  & 58.48 & 49.73\\
        KTO & 58.50 & 51.94\\ 
        IPO  & 58.59 & 49.60\\
        DPO  & 59.16 & 51.26\\
        SPO-basic (unweighted) & 60.83 & 53.17 \\
        SPO & \textbf{61.63} &  \textbf{56.25}\\
        \hline
    \end{tabular}
    \label{tab:win_rates Alpaca}
\end{table}

\subsection{Alignment to Ranking and Best-of-$n$ Preference Data}\label{sec:exp ranking}
{\bf Experiment setting:} \ {\it Dataset:} Alignment research has largely focused on pairwise preferences, and publicly available datasets for other preference types are rather scarce. To address this, we generated an $n$-ary ranked preference dataset ($n=4$) using GPT4o API as the labeler. Building on TinyStories \citep{eldan2023tinystories} --a synthetic collection of short stories aimed at children aged 3 to 4-- we created a preference dataset to align the stories to an older audience. The dataset contains around 5,000 samples, each with four stories generated by the TinyStories pre-trained 110M model, ranked by GPT4o-2024-08-06 based on coherence and engagement for older students. The best-of-$n$ dataset was then derived by removing the rank labels for the 2nd, 3rd, and 4th responses. Further details are in Appendix~\ref{app:prompt ranking}.

 {\it Training:} Using the implementation from \citep{karpathy_llama2c} and the supervised fine-tuned model from \citep{HuggingfaceGIT}, we aligned a 110M parameter model with ranking and best-of-$n$ versions of SPO. We compared this with three baselines: the ranking version of DPO (AppendixA3 of \citep{rafailov2023direct}), S-DPO \citep{chen2024softmax}, and Best-response-SFT. Best-response-SFT is derived by all performing supervised fine-tuning on all top-rank responses. See Appendix~\ref{app:exp ranking} for further  details.

{\bf Results:} 
Tables~\ref{tab:ranking} and~\ref{tab:bestofn} present peak win rates against the Best-response-SFT model for aligning the 110M TinyStories SFT model to ranking and best-of-$n$ datasets using different algorithms. The SPO algorithm outperforms all baselines for both types of alignment.

\begin{table}[ht]
    \centering
    \begin{minipage}[t]{0.47\textwidth}
        \centering
    \caption{Alignment of TinyStories SFT model to the ranking dataset.}
    \vspace{.1cm}
    \begin{tabular}{|c|c|}
        \hline
        \textbf{Alignment method} & \textbf{Win-rate (\%)} \\
        \hline
        Best-response-SFT & 50.0 \\
        DPO (ranking version) & 67.0  \\
        S-DPO (ranking) & 55.1 
        \\
        SPO (ranking) & \textbf{68.5} \\
        \hline
    \end{tabular}
    \label{tab:ranking}
    \end{minipage}
    \hfill
    \begin{minipage}[t]{0.47\textwidth}
        \centering
    \caption{Alignment of TinyStories SFT model to the  best-of-$n$ dataset.}
    \vspace{.1cm}
    \begin{tabular}{|c|c|}
        \hline
        \textbf{Alignment method} & \textbf{Win-rate (\%)} \\
        \hline
        Best-response-SFT & 50.0 \\
        S-DPO (best-of-$n$) & 66.0 \\
        SPO-basic (best-of-$n$) & 67.8 \\
        SPO (best-of-$n$) & \textbf{70.5} \\
        \hline
    \end{tabular}
    \label{tab:bestofn}
    \end{minipage}
\end{table}

\subsection{Ablation Study}
{\bf Global regularization vs in-dataset regularization:} 
To evaluate the significance of global regularization, we trained a Llama2-7B SFT model  the SPO-basic algorithm  while replacing the global $\dkl$ regularizer of \eqref{eq:dkl tokenwise} with in-dataset regularizers.
In particular, we tested three different in-dataset regularizers, including the popular $-\log \pi(y_w|x)$ regularizer that is used in several alignment algorithm like CPO, SLiC-HF \citep{zhao2023slic}, and RRHF \citep{yuan2024rrhf}.
Other experiment settings are similar to the setting discussed in Section~\ref{sec:exp alpaca}.
Table~\ref{tab:ablation regularizer} provides the list of these in-dataset regularizers, as well as the peak win-rates of SPO-basic using these regularizers against the Llama2-7B SFT model. As it can be seen from the table, the global $\dkl$ regularizer of \eqref{eq:dkl tokenwise} achieved better performance compared to  in-dataset regularizers.

{\bf Weight function $\mu$:} 
As observed in the experiment of Section~\ref{sec:exp alpaca} on the alignment to   AlpacaFarm preferences dataset, the weighted version of SPO achieved win-rate $61.63\%$ compared to the win-rate of $60.83\%$ for SPO-basic. 
In the TinyStories experiment of Section~\ref{sec:exp alpaca}, use of weight function $\mu$ improved win-rate from $xx\%$  to $xx\%$. In the experiment of Section~\ref{sec:exp ranking} on the alignment to   best-of-$n$ preferences dataset, the weighted version of SPO improved the win-rate to $70.5\%$, up from $ 67.8\%$ for SPO-basic. In the ranking experiment, use of non-uniform $\mu$ did not improve the win-rate.
In all of these experiments, the $\mu$ function in the weighted SPO algorithm is of the form \eqref{eq:mu sig} with parameter $\gamma=0.01$. We performed no  sweeping on the parameter $\gamma=0.01$.

\begin{table}[ht]
    \centering
    \caption{Ablation of SPO regularizer (model: Llama2-7B, dataset: AlpacaFarm)}
    \vspace{.1cm}
    \begin{tabular}{|l|l|c|}
        \hline
        \textbf{Regularizer type} & \textbf{Regularizer formula} & \!\!\!$\begin{array}{c}\textrm{\textbf{Win-rate}}\\ \textrm{\small \textbf{vs. SFT}}\end{array}$\!\!\! \\
        \hline
        in-dataset (log probability) &  {\small $-\Exp_{(x;y_w,y_l)\in \D}[\log \pi_\theta(y_w|x)]$ }& 54.1 \\
         in-dataset  (tokenwise) & 
         {\small $\Exp_{(x;y) \in D}\left[ \sum_{\tau=0}^{\lvert y\rvert-1} \Exp_{Y\sim \pi_\theta(\cdot | x,y_{1:\tau})} \log \frac{\pi_\theta(Y | x,y_{1:\tau})}{\piref(Y | x,y_{1:\tau})}  \right]$}
& 59.7 \\
         in-dataset (importance sampling) & {\small $\Exp_{(x;y) \in D}\left[\frac{\pi_\theta(y|x)}{\piref(y|x)} \log \frac{\pi_\theta(y|x)}{\piref(y|x)}\right]$ } & 59.9 \\
          global & Eq. \eqref{eq:dkl tokenwise} &  \textbf{60.8}\\
        \hline
    \end{tabular}
    \label{tab:ablation regularizer}
\end{table}

\section{Related Works} \label{sec:literature}
RLHF aims to align AI systems with human preferences, relying on human judgments rather than manual rewards or demonstrations. This method has been successfully applied in fine-tuning large language models (LLMs) \citep{achiam2023gpt,touvron2023llama,ouyang2022training}, but faces challenges including data quality issues, reward misgeneralization, and policy optimization complexities. Research to enhance RLHF includes methods such as rejection sampling for response generation \citep{dong2023raft,touvron2023llama}, where the highest-reward response from a fixed number is selected for fine-tuning. \cite{zhang2023wisdom} simplified instruction alignment with language models into a goal-oriented reinforcement learning task, utilizing a two-phase approach of high-temperature online sampling and supervised learning with relabeled data during offline training. A two-loop learning algorithm, Grow and Improve, has also been proposed for iterative model alignment and training on a fixed dataset \citep{gulcehre2023reinforced}. The Grow loop leverages the existing model to create and sample a dataset while the Improve loop iteratively trains the model on a fixed dataset.

Given the challenges of RLHF, reward-model-free alignment methods emerged fairly recently and have gained  a lot of popularity.
Reward-model-free approach to alignment was popularized specifically after introduction of DPO in \citep{rafailov2023direct}, which is breifly outlined in Section~\ref{sec:background}. 
Recently, several works have been proposed methods to improve DPO.   \cite{azar2023general}  considered an
objective called $\Psi$PO for learning from human preferences that is expressed in terms of
pairwise preferences, with no need for assumption of the BT model. The authors focused on a specific instance, IPO, of $\Psi$PO by setting $\Psi$ as the identity, aiming to mitigate the overfitting and tendency-towards-deterministic-policies issues observed in DPO. \cite{munos2023nash}  formulated the alignment problem  as finding the Nash equilibrium (NE) of a maximin game with two policies $\pi$ and $\pi'$ as two players where each policy receives pay off of probability of winning over the other policy. The authors showed that the NE point can be approximated by running a mirror-descent-like algorithm.  \cite{rosset2024direct} and \cite{swamy2024minimaximalist} proposed other approaches to approximate the NE point based on no-regret algorithms \citep{freund1997decision}.  
\cite{chowdhury2024provably} proposed a loss function which is an unbiased estimate of the original DPO loss, and aims to alleviate sensitivity to flipped labels due to labeling noise.  \cite{amini2024direct} added an offset term within the sigmoid function in the DPO loss. In this manner, the model puts more weight on the winning response.    
To control the length of the output, R-DPO \citep{park2024disentangling} modified the DPO loss by adding a regularization term that penalizes lengthy responses. To achieve the same goal, SimPO \citep{meng2024simpo} replaced the log-likelihood ratio between the current policy and the baseline model with the average log probability of the sequence under the current policy.
In \citep{rafailov2024r}, a token-level formulation of DPO has been proposed which enables a likelihood search over a DPO model by classical search-based algorithms, such as MCTS. Inspired by cringe loss previously proposed for binary feedback, \cite{xu2023some} adapted cringe loss for the pairwise preference context. 
Recently, in \citep{ethayarajh2024kto}, KTO loss has been proposed for alignment from non-paired preference datasets.

In practice, the performance of an alignment technique highly depends on the quality of the human preference dataset. Noisy preference pairs could potentially limit the language models from capturing human intention. 
In \citep{liu2023statistical},  DPO was used in conjunction with an improved preference dataset via a rejection sampling technique, arguing that DPO suffers from a mismatch between the sampling distribution and the policy corresponding to true expert preferences.  \citep{tunstall2023zephyr} formed a dataset of conservative pairs by collecting AI feedback through an ensemble of chat model completions, followed by GPT-4 scoring. Then, they employed  DPO for alignment to this improved dataset.
The work in \citep{yin2024relative} leveraged semantic correlations of prompts in the dataset to form more conservative response pairs. For a given prompt $(x;y_w,y_l)$, a prompt $x'$ with a similar semantic to a tuple $(x';y'_w,y'_l)$ is used to form more conservative pairs. In particular, they propose a weighted version of the DPO loss  where for a given labeled data $(x;y_w,y_l)$, $y_w$ is approved while $y_l$ and any $y'_l$ (from a similar prompt $x'$) are penalized.

\cite{zhao2022calibrating} proposed a separable alignment technique, called SLiC,  where, similar to SPO, the alignment loss is the sum of two terms: a calibration loss that contrasts a winner and loser responses encouraging the model $\pi_{\theta}$ to assign more probability to the winner, and a regularizer term.   SLiC was further developed  in \citep{zhao2023slic} to be used in alignment to preference data, where they proposed the SLiC-HF algorithm. SLiC-HF  involves a rectified contrastive loss as its calibration loss and a  log-likelihood term as the regularization. Other than a different choices for preference loss and regularization, SLiC-HF diverges from the SPO framework in that the regularization in SLiC-HF is limited to the preference or pre-training datasets, not using online samples from $\pi_{\theta}$ as in the $\dkl$ regularize. 

Concurrent to our work, CPO \citep{xu2024contrastive}  motivated the preference loss in \eqref{eq:pref loss general} as a heuristic approximation of DPO to reduce its complexity by removing $\piref$ from the DPO loss. This results in a contrastive loss similar to SPO-basic \eqref{eq:pref loss general}. In contrast, we derived  \eqref{eq:pref loss general} and its extension in \eqref{eq:pref loss general mu} as what should be truly minimized, regardless of complexity considerations. Other major differences include CPO's use of in-dataset regularization versus SPO's global regularization, the incorporation of a weighting mechanism in SPO, the generalization of SPO to other data types, and theoretical guarantees proposed in this work. As we demonstrated in experiments (see Table \ref{tab:ablation regularizer}),  the choice of regularizer significantly impacts performance.

\section{Limitations} \label{sec:limit}
This paper introduced SPO, a class of algorithms designed for alignment to the expert's distribution, with controlled softness. The paper also presents theoretical results demonstrating favorable landscape and convergence properties of SPO. 
Below, we discuss some  limitations of this work.

{\bf Limitations of the SPO Algorithm:}
The main limitation of the SPO framework is the computational complexity of the regularizer, which requires online sampling from $\pi_\theta$. This limitation was discussed in Section~\ref{sec:reweight}, when intermittent batch generation of samples was proposed to mitigate the computational overhead .

{\bf Limitations of the Current Study:}
An interesting question not addressed in this work is how the performance of different methods varies with dataset size and the level of noise in preference labels.
Another promising research direction is generalizing these alignment methods to other forms of implicit preference information hidden in data (e.g., in chat logs), instead of relying on labeled preference datasets.


\bibliography{LLM}

\begin{thebibliography}{39}
\providecommand{\natexlab}[1]{#1}
\providecommand{\url}[1]{\texttt{#1}}
\expandafter\ifx\csname urlstyle\endcsname\relax
  \providecommand{\doi}[1]{doi: #1}\else
  \providecommand{\doi}{doi: \begingroup \urlstyle{rm}\Url}\fi

\bibitem[Achiam et~al.(2023)Achiam, Adler, Agarwal, Ahmad, Akkaya, Aleman,
  Almeida, Altenschmidt, Altman, Anadkat, et~al.]{achiam2023gpt}
Josh Achiam, Steven Adler, Sandhini Agarwal, Lama Ahmad, Ilge Akkaya,
  Florencia~Leoni Aleman, Diogo Almeida, Janko Altenschmidt, Sam Altman,
  Shyamal Anadkat, et~al.
\newblock Gpt-4 technical report.
\newblock \emph{arXiv preprint arXiv:2303.08774}, 2023.

\bibitem[Amini et~al.(2024)Amini, Vieira, and Cotterell]{amini2024direct}
Afra Amini, Tim Vieira, and Ryan Cotterell.
\newblock Direct preference optimization with an offset.
\newblock \emph{arXiv e-prints}, pp.\  arXiv--2402, 2024.

\bibitem[Azar et~al.(2023)Azar, Rowland, Piot, Guo, Calandriello, Valko, and
  Munos]{azar2023general}
Mohammad~Gheshlaghi Azar, Mark Rowland, Bilal Piot, Daniel Guo, Daniele
  Calandriello, Michal Valko, and R{\'e}mi Munos.
\newblock A general theoretical paradigm to understand learning from human
  preferences.
\newblock \emph{arXiv preprint arXiv:2310.12036}, 2023.

\bibitem[Bradley \& Terry(1952)Bradley and Terry]{bradley1952rank}
Ralph~Allan Bradley and Milton~E Terry.
\newblock Rank analysis of incomplete block designs: I. the method of paired
  comparisons.
\newblock \emph{Biometrika}, 39\penalty0 (3/4):\penalty0 324--345, 1952.

\bibitem[Chen et~al.(2024)Chen, Tan, Zhang, Yang, Sheng, Zhang, Wang, and
  Chua]{chen2024softmax}
Yuxin Chen, Junfei Tan, An~Zhang, Zhengyi Yang, Leheng Sheng, Enzhi Zhang,
  Xiang Wang, and Tat-Seng Chua.
\newblock On softmax direct preference optimization for recommendation.
\newblock \emph{arXiv preprint arXiv:2406.09215}, 2024.

\bibitem[Chowdhury et~al.(2024)Chowdhury, Kini, and
  Natarajan]{chowdhury2024provably}
Sayak~Ray Chowdhury, Anush Kini, and Nagarajan Natarajan.
\newblock Provably robust dpo: Aligning language models with noisy feedback.
\newblock \emph{arXiv preprint arXiv:2403.00409}, 2024.

\bibitem[Christiano et~al.(2017)Christiano, Leike, Brown, Martic, Legg, and
  Amodei]{christiano2017deep}
Paul~F Christiano, Jan Leike, Tom Brown, Miljan Martic, Shane Legg, and Dario
  Amodei.
\newblock Deep reinforcement learning from human preferences.
\newblock \emph{Advances in neural information processing systems}, 30, 2017.

\bibitem[Dong et~al.(2023)Dong, Xiong, Goyal, Pan, Diao, Zhang, Shum, and
  Zhang]{dong2023raft}
Hanze Dong, Wei Xiong, Deepanshu Goyal, Rui Pan, Shizhe Diao, Jipeng Zhang,
  Kashun Shum, and Tong Zhang.
\newblock Raft: Reward ranked finetuning for generative foundation model
  alignment.
\newblock \emph{arXiv preprint arXiv:2304.06767}, 2023.

\bibitem[Dubois et~al.(2023)Dubois, Li, Taori, Zhang, Gulrajani, Ba, Guestrin,
  Liang, and Hashimoto]{dubois2023alpacafarm}
Yann Dubois, Xuechen Li, Rohan Taori, Tianyi Zhang, Ishaan Gulrajani, Jimmy Ba,
  Carlos Guestrin, Percy Liang, and Tatsunori~B. Hashimoto.
\newblock Alpacafarm: A simulation framework for methods that learn from human
  feedback, 2023.

\bibitem[Eldan \& Li(2023)Eldan and Li]{eldan2023tinystories}
Ronen Eldan and Yuanzhi Li.
\newblock Tinystories: How small can language models be and still speak
  coherent english?
\newblock \emph{arXiv preprint arXiv:2305.07759}, 2023.

\bibitem[Ethayarajh et~al.(2024)Ethayarajh, Xu, Muennighoff, Jurafsky, and
  Kiela]{ethayarajh2024kto}
Kawin Ethayarajh, Winnie Xu, Niklas Muennighoff, Dan Jurafsky, and Douwe Kiela.
\newblock Kto: Model alignment as prospect theoretic optimization.
\newblock \emph{arXiv preprint arXiv:2402.01306}, 2024.

\bibitem[Freund \& Schapire(1997)Freund and Schapire]{freund1997decision}
Yoav Freund and Robert~E Schapire.
\newblock A decision-theoretic generalization of on-line learning and an
  application to boosting.
\newblock \emph{Journal of computer and system sciences}, 55\penalty0
  (1):\penalty0 119--139, 1997.

\bibitem[Gulcehre et~al.(2023)Gulcehre, Paine, Srinivasan, Konyushkova, Weerts,
  Sharma, Siddhant, Ahern, Wang, Gu, et~al.]{gulcehre2023reinforced}
Caglar Gulcehre, Tom~Le Paine, Srivatsan Srinivasan, Ksenia Konyushkova, Lotte
  Weerts, Abhishek Sharma, Aditya Siddhant, Alex Ahern, Miaosen Wang, Chenjie
  Gu, et~al.
\newblock Reinforced self-training (rest) for language modeling.
\newblock \emph{arXiv preprint arXiv:2308.08998}, 2023.

\bibitem[Karpathy(2023)]{HuggingfaceGIT}
Karpathy.
\newblock
  \url{https://huggingface.co/karpathy/tinyllamas/commit/5c22d0a5f31b635f3e4bf8f2d4dd87363ae3a275},
  2023.

\bibitem[Karpathy(2024)]{karpathy_llama2c}
Andrej Karpathy.
\newblock llama2.c: Inference llama 2 in one file of pure c.
\newblock \url{https://github.com/karpathy/llama2.c}, 2024.
\newblock GitHub repository.

\bibitem[Li et~al.(2023)Li, Zhang, Dubois, Taori, Gulrajani, Guestrin, Liang,
  and Hashimoto]{alpaca_eval}
Xuechen Li, Tianyi Zhang, Yann Dubois, Rohan Taori, Ishaan Gulrajani, Carlos
  Guestrin, Percy Liang, and Tatsunori~B. Hashimoto.
\newblock Alpacaeval: An automatic evaluator of instruction-following models.
\newblock \url{https://github.com/tatsu-lab/alpaca_eval}, 2023.

\bibitem[Liu et~al.(2023)Liu, Zhao, Joshi, Khalman, Saleh, Liu, and
  Liu]{liu2023statistical}
Tianqi Liu, Yao Zhao, Rishabh Joshi, Misha Khalman, Mohammad Saleh, Peter~J
  Liu, and Jialu Liu.
\newblock Statistical rejection sampling improves preference optimization.
\newblock In \emph{The Twelfth International Conference on Learning
  Representations}, 2023.

\bibitem[Luce(2005)]{luce2005individual}
R~Duncan Luce.
\newblock \emph{Individual choice behavior: A theoretical analysis}.
\newblock Courier Corporation, 2005.

\bibitem[Meng et~al.(2024)Meng, Xia, and Chen]{meng2024simpo}
Yu~Meng, Mengzhou Xia, and Danqi Chen.
\newblock Simpo: Simple preference optimization with a reference-free reward.
\newblock \emph{arXiv preprint arXiv:2405.14734}, 2024.

\bibitem[Munos et~al.(2023)Munos, Valko, Calandriello, Azar, Rowland, Guo,
  Tang, Geist, Mesnard, Michi, et~al.]{munos2023nash}
R{\'e}mi Munos, Michal Valko, Daniele Calandriello, Mohammad~Gheshlaghi Azar,
  Mark Rowland, Zhaohan~Daniel Guo, Yunhao Tang, Matthieu Geist, Thomas
  Mesnard, Andrea Michi, et~al.
\newblock Nash learning from human feedback.
\newblock \emph{arXiv preprint arXiv:2312.00886}, 2023.

\bibitem[Ouyang et~al.(2022)Ouyang, Wu, Jiang, Almeida, Wainwright, Mishkin,
  Zhang, Agarwal, Slama, Ray, et~al.]{ouyang2022training}
Long Ouyang, Jeffrey Wu, Xu~Jiang, Diogo Almeida, Carroll Wainwright, Pamela
  Mishkin, Chong Zhang, Sandhini Agarwal, Katarina Slama, Alex Ray, et~al.
\newblock Training language models to follow instructions with human feedback.
\newblock \emph{Advances in Neural Information Processing Systems},
  35:\penalty0 27730--27744, 2022.

\bibitem[Park et~al.(2024)Park, Rafailov, Ermon, and
  Finn]{park2024disentangling}
Ryan Park, Rafael Rafailov, Stefano Ermon, and Chelsea Finn.
\newblock Disentangling length from quality in direct preference optimization.
\newblock \emph{arXiv preprint arXiv:2403.19159}, 2024.

\bibitem[Plackett(1975)]{plackett1975analysis}
Robin~L Plackett.
\newblock The analysis of permutations.
\newblock \emph{Journal of the Royal Statistical Society Series C: Applied
  Statistics}, 24\penalty0 (2):\penalty0 193--202, 1975.

\bibitem[Radford et~al.(2018)Radford, Narasimhan, Salimans, Sutskever,
  et~al.]{radford2018improving}
Alec Radford, Karthik Narasimhan, Tim Salimans, Ilya Sutskever, et~al.
\newblock Improving language understanding by generative pre-training.
\newblock 2018.

\bibitem[Rafailov et~al.(2023)Rafailov, Sharma, Mitchell, Ermon, Manning, and
  Finn]{rafailov2023direct}
Rafael Rafailov, Archit Sharma, Eric Mitchell, Stefano Ermon, Christopher~D
  Manning, and Chelsea Finn.
\newblock Direct preference optimization: Your language model is secretly a
  reward model.
\newblock \emph{arXiv preprint arXiv:2305.18290}, 2023.

\bibitem[Rafailov et~al.(2024)Rafailov, Hejna, Park, and Finn]{rafailov2024r}
Rafael Rafailov, Joey Hejna, Ryan Park, and Chelsea Finn.
\newblock From r to q*: Your language model is secretly a q-function.
\newblock \emph{arXiv preprint arXiv:2404.12358}, 2024.

\bibitem[Ramachandran et~al.(2016)Ramachandran, Liu, and
  Le]{ramachandran2016unsupervised}
Prajit Ramachandran, Peter~J Liu, and Quoc~V Le.
\newblock Unsupervised pretraining for sequence to sequence learning.
\newblock \emph{arXiv preprint arXiv:1611.02683}, 2016.

\bibitem[Rosset et~al.(2024)Rosset, Cheng, Mitra, Santacroce, Awadallah, and
  Xie]{rosset2024direct}
Corby Rosset, Ching-An Cheng, Arindam Mitra, Michael Santacroce, Ahmed
  Awadallah, and Tengyang Xie.
\newblock Direct nash optimization: Teaching language models to self-improve
  with general preferences.
\newblock \emph{arXiv preprint arXiv:2404.03715}, 2024.

\bibitem[Schulman et~al.(2017)Schulman, Wolski, Dhariwal, Radford, and
  Klimov]{schulman2017proximal}
John Schulman, Filip Wolski, Prafulla Dhariwal, Alec Radford, and Oleg Klimov.
\newblock Proximal policy optimization algorithms.
\newblock \emph{arXiv preprint arXiv:1707.06347}, 2017.

\bibitem[Swamy et~al.(2024)Swamy, Dann, Kidambi, Wu, and
  Agarwal]{swamy2024minimaximalist}
Gokul Swamy, Christoph Dann, Rahul Kidambi, Zhiwei~Steven Wu, and Alekh
  Agarwal.
\newblock A minimaximalist approach to reinforcement learning from human
  feedback.
\newblock \emph{arXiv preprint arXiv:2401.04056}, 2024.

\bibitem[Touvron et~al.(2023)Touvron, Martin, Stone, Albert, Almahairi, Babaei,
  Bashlykov, Batra, Bhargava, Bhosale, et~al.]{touvron2023llama}
Hugo Touvron, Louis Martin, Kevin Stone, Peter Albert, Amjad Almahairi, Yasmine
  Babaei, Nikolay Bashlykov, Soumya Batra, Prajjwal Bhargava, Shruti Bhosale,
  et~al.
\newblock Llama 2: Open foundation and fine-tuned chat models.
\newblock \emph{arXiv preprint arXiv:2307.09288}, 2023.

\bibitem[Tunstall et~al.(2023)Tunstall, Beeching, Lambert, Rajani, Rasul,
  Belkada, Huang, von Werra, Fourrier, Habib, et~al.]{tunstall2023zephyr}
Lewis Tunstall, Edward Beeching, Nathan Lambert, Nazneen Rajani, Kashif Rasul,
  Younes Belkada, Shengyi Huang, Leandro von Werra, Cl{\'e}mentine Fourrier,
  Nathan Habib, et~al.
\newblock Zephyr: Direct distillation of lm alignment.
\newblock \emph{arXiv preprint arXiv:2310.16944}, 2023.

\bibitem[Xu et~al.(2024)Xu, Sharaf, Chen, Tan, Shen, Van~Durme, Murray, and
  Kim]{xu2024contrastive}
Haoran Xu, Amr Sharaf, Yunmo Chen, Weiting Tan, Lingfeng Shen, Benjamin
  Van~Durme, Kenton Murray, and Young~Jin Kim.
\newblock Contrastive preference optimization: Pushing the boundaries of llm
  performance in machine translation.
\newblock \emph{arXiv preprint arXiv:2401.08417}, 2024.

\bibitem[Xu et~al.(2023)Xu, Lee, Sukhbaatar, and Weston]{xu2023some}
Jing Xu, Andrew Lee, Sainbayar Sukhbaatar, and Jason Weston.
\newblock Some things are more cringe than others: Preference optimization with
  the pairwise cringe loss.
\newblock \emph{arXiv preprint arXiv:2312.16682}, 2023.

\bibitem[Yin et~al.(2024)Yin, Wang, Gu, Huang, Chen, and Zhou]{yin2024relative}
Yueqin Yin, Zhendong Wang, Yi~Gu, Hai Huang, Weizhu Chen, and Mingyuan Zhou.
\newblock Relative preference optimization: Enhancing llm alignment through
  contrasting responses across identical and diverse prompts.
\newblock \emph{arXiv preprint arXiv:2402.10958}, 2024.

\bibitem[Yuan et~al.(2024)Yuan, Yuan, Tan, Wang, Huang, and
  Huang]{yuan2024rrhf}
Hongyi Yuan, Zheng Yuan, Chuanqi Tan, Wei Wang, Songfang Huang, and Fei Huang.
\newblock Rrhf: Rank responses to align language models with human feedback.
\newblock \emph{Advances in Neural Information Processing Systems}, 36, 2024.

\bibitem[Zhang et~al.(2023)Zhang, Liu, Wong, Abbeel, and
  Gonzalez]{zhang2023wisdom}
Tianjun Zhang, Fangchen Liu, Justin Wong, Pieter Abbeel, and Joseph~E Gonzalez.
\newblock The wisdom of hindsight makes language models better instruction
  followers.
\newblock \emph{arXiv preprint arXiv:2302.05206}, 2023.

\bibitem[Zhao et~al.(2022)Zhao, Khalman, Joshi, Narayan, Saleh, and
  Liu]{zhao2022calibrating}
Yao Zhao, Mikhail Khalman, Rishabh Joshi, Shashi Narayan, Mohammad Saleh, and
  Peter~J Liu.
\newblock Calibrating sequence likelihood improves conditional language
  generation.
\newblock In \emph{The Eleventh International Conference on Learning
  Representations}, 2022.

\bibitem[Zhao et~al.(2023)Zhao, Joshi, Liu, Khalman, Saleh, and
  Liu]{zhao2023slic}
Yao Zhao, Rishabh Joshi, Tianqi Liu, Misha Khalman, Mohammad Saleh, and Peter~J
  Liu.
\newblock Slic-hf: Sequence likelihood calibration with human feedback.
\newblock \emph{arXiv preprint arXiv:2305.10425}, 2023.

\end{thebibliography}
\bibliographystyle{iclr2025_conference}

\newpage
\appendix
\onecolumn
\begin{center}
	\Large \bf Appendices
\end{center}

\section{Proof of Theorems~\ref{th:convergence} and~\ref{th:convergence mu}}\label{app:proof th}
In this appendix, we present the proof of Theorems~\ref{th:convergence} and~\ref{th:convergence mu}. The high-level proof idea is to show that moving along the projected\footnote{Projection on the probability simplex.} negative gradient of the preference loss (i.e., the ODE direction) results in an absolute reduction of the Euclidean distance of $\pi_\theta$ from $\softmax\big(r(\cdot|x)/\alpha\big)$.

Without loss of generality, we prove the theorem for a single fixed  $x\in\X$, and remove $x$ from the notations, for the sake of notation simplicity. 

Given the rewards $r(\cdot)$ in the Bradley-Terry model, let
\begin{equation}
	\pi^*(\cdot) \defeq \softmax\big(r(\cdot)\big).
\end{equation}
For any $\alpha\in[0,1)$, let 
\begin{equation}\label{eq:pialpha}
	\pi^*_\alpha(\cdot)\defeq\softmax\big(r(\cdot)/ \alpha\big),
\end{equation}
 and let $\pib^\alpha$ be its vector representation. Therefore, for any $\alpha\in[0,1]$, and for any $y$,
 \begin{equation}\label{eq:pi pialpha}
 	\pi^*(y) = z_\alpha\,\times\, \big(\pi^*_\alpha(y)\big)^\alpha,\qquad \textrm{where }  z_\alpha \defeq \frac{\Big(\sum_{y'}e^{r(y')/\alpha}\Big)^\alpha}{\sum_{y'}e^{r(y')}} .
 \end{equation}
 Moreover,  it follows from the consistency of distribution $\D$ with the Bradley-Terry model that for any pair $(y_1, y_2)$,
\begin{equation}\label{eq:d ratio pi ratio}
	\frac{\pd(y_1, y_2)}{\pd(y_1, y_2)+ \pd(y_2, y_1)} =  \mathcal{P}_{\D}\big(y_1\succ y_2\big)  = \frac{\exp{r(y_1)}}{\exp{r(y_1)} + \exp{r(y_2)}}  = \frac{\pi^*(y_1)}{\pi^*(y_1)+\pi^*(y_2)}.
\end{equation}
For any $y_1,y_2\in\Y$ let
\begin{equation}
\mut(y_1,y_2)\defeq\mu(y_1,y_2) \big( \pd(y_1, y_2)+ \pd(y_2, y_1) \big).
\end{equation}
Note that the symmetry of $\mu$ implies symmetry of $\mut$ with respect to its first and second arguments. Then, 
\begin{equation}\label{eq:D to pi}
	\mu(y_1,y_2) \,\pd(y_1, y_2)
  = \mut(y_1,y_2)\, \frac{\pd(y_1, y_2)}{\pd(y_1, y_2)+ \pd(y_2, y_1)}
  =  \mut(y_1,y_2)\, \frac{\pi^*(y_1)}{\pi^*(y_1)+\pi^*(y_2)},
\end{equation}
where the last equality follows from \eqref{eq:d ratio pi ratio}.

Consider a $\pib_\theta$ in the relative interior of the probability simplex and let  $\vb$ be the negative gradient of the preference loss 
\begin{equation} \label{eq:v}
	\vb \defeq -\nabla_{\pi_\theta}\,\losspref^{\alpha,\mu}(\pi_\theta,\D),
\end{equation} 
where $\losspref^{\alpha,\mu}$ is defined in \eqref{eq:grad rw}. For any  $y\in\Y$, let $v(y)$ be the entry of $\vb$ that corresponds to $y$.
Then, 
\begin{equation}\label{eq:vy derivation}
\begin{split}
v(y) =& 
\sum_{y'\in \Y} \pd(y,y') \mu(y,y')\, \frac{\pi_\theta(y')^\alpha}{\pi_\theta(y)^\alpha+\pi_\theta(y')^\alpha}\, 
\left(\frac{d}{d\,\pi_\theta(y)} \log\,	 \pi_\theta(y)-\frac{d}{d\,\pi_\theta(y)} \log \, \pi_\theta(y')\right) \\
&+\sum_{y'\in \Y} \pd(y',y)\mu(y',y)\, \frac{\pi_\theta(y)^\alpha}{\pi_\theta(y)^\alpha+\pi_\theta(y')^\alpha}\, 
\left(\frac{d}{d\,\pi_\theta(y)} \log\,	 \pi_\theta(y')-\frac{d}{d\,\pi_\theta(y)} \log \, \pi_\theta(y)\right)\\
=&\sum_{y'\in \Y} \mut(y,y')\, \frac{\pi^*(y)}{\pi^*(y)+\pi^*(y')} 
\,\,\frac{\pi_\theta(y')^\alpha}{\pi_\theta(y)^\alpha+\pi_\theta(y')^\alpha}
\,\, \frac{1}{\pi_\theta(y)}\\
&-\sum_{y'\in \Y} \mut(y,y')\, \frac{\pi^*(y')}{\pi^*(y)+\pi^*(y')} 
\,\,\frac{\pi_\theta(y)^\alpha}{\pi_\theta(y)^\alpha+\pi_\theta(y')^\alpha}
\,\, \frac{1}{\pi_\theta(y)}\\
=& \sum_{y'\in \Y}\, \frac{\mut(y,y')\, \big(\pi^*(y)\pi_\theta(y')^\alpha-\pi^*(y')\pi_\theta(y)^\alpha\big)}{\pi_\theta(y)\, \big(\pi^*(y)+\pi^*(y')\big) \,\big(\pi_\theta(y)^\alpha+\pi_\theta(y')^\alpha\big)},
	\end{split}
\end{equation}
where the first equality follows from \eqref{eq:grad rw} and by considering all the terms that include $y$ either as winner (the first sum) or loser (the second sum); the second equality is due to \eqref{eq:D to pi} and the fact that $\mut$ is symmetric. 
To simplify the notation, for any $y$ and $y'$, let
\begin{equation}\label{eq:h}
	h(y,y')\defeq  \frac{\mut(y,y')}{ \pi_\theta(y)\,\pi_\theta(y')\,\big(\pi^*(y)+\pi^*(y')\big) \,\big(\pi_\theta(y)^\alpha+\pi_\theta(y')^\alpha\big)}. 
\end{equation}
Then, \eqref{eq:vy derivation} simplifies to 
\begin{equation}\label{eq:simpler vy}
v(y) = \sum_{y'\in \Y} h(y,y') \,\pi_\theta(y')\,\big(\pi_\theta(y')^\alpha\pi^*(y)-\pi_\theta(y)^\alpha\pi^*(y')\big).
\end{equation}
Consequently,
\begin{equation}
\begin{split}
\vb^T &(\pib_\theta-\pib^*_\alpha) =\sum_{y\in\Y}\, v(y) \,\big(\pi_\theta(y)-\pi^*_\alpha(y)\big)\\
=&\sum_{y,y'\in\Y}\, h(y,y')  \,\big(\pi_\theta(y')^\alpha\pi^*(y)-\pi_\theta(y)^\alpha\pi^*(y')\big)\,\,\pi_\theta(y')\,
\big(\pi_\theta(y)-\pi^*_\alpha(y)\big)  \\
=&\,\,\frac{1}2 \sum_{y,y'\in\Y}\, h(y,y')   \,\big(\pi_\theta(y')^\alpha\pi^*(y)-\pi_\theta(y)^\alpha\pi^*(y')\big)\,
\,\pi_\theta(y')\,
\big(\pi_\theta(y)-\pi^*_\alpha(y)\big) \\ 
&+\frac{1}2 \sum_{y',y\in\Y}\, h(y',y)   \,\big(\pi_\theta(y)^\alpha\pi^*(y')-\pi_\theta(y')^\alpha\pi^*(y)\big)\,
\,\pi_\theta(y)\,
\big(\pi_\theta(y')-\pi^*_\alpha(y')\big) \\
=&\,\,\frac{1}2 \sum_{y,y'\in\Y}\, h(y,y')   \,\big(\pi_\theta(y')^\alpha\pi^*(y)-\pi_\theta(y)^\alpha\pi^*(y')\big)\,
\,\,
\big(\pi_\theta(y')\pi_\theta(y)-\pi_\theta(y')\pi^*_\alpha(y)\big) \\ 
&+\frac{1}2 \sum_{y,y'\in\Y}\, h(y,y')   \,\big(\pi_\theta(y')^\alpha\pi^*(y)-\pi_\theta(y)^\alpha\pi^*(y')\big)\,
\,\,
\big(\pi_\theta(y)\pi^*_\alpha(y')-\pi_\theta(y)\pi_\theta(y')\big)\\
=&\,\,\frac{1}2 \sum_{y,y'\in\Y}\, h(y,y')   \,\big(\pi_\theta(y')^\alpha\pi^*(y)-\pi_\theta(y)^\alpha\pi^*(y')\big)\,
\,\,
\big(\pi_\theta(y)\pi^*_\alpha(y')-\pi_\theta(y')\pi^*_\alpha(y)\big)\\
=&\,-\frac{z_\alpha}2 \sum_{y,y'\in\Y}\, h(y,y')   \,\Big(\big(\pi_\theta(y')\pi^*_\alpha(y)\big)^\alpha-\big(\pi_\theta(y)\pi^*_\alpha(y')\big)^\alpha\Big)\,
\,\,
\big(\pi_\theta(y')\pi^*_\alpha(y)-\pi_\theta(y)\pi^*_\alpha(y')\big)
\\=&\,-\frac{z_\alpha}2 \sum_{y,y'\in\Y}\, h(y,y') \big(\pi_\theta(y)\pi_\theta(y')\big)^{1+\alpha} \,\left(\left(\frac{\pi^*_\alpha(y)}{\pi_\theta(y)}\right)^{\alpha}-\left(\frac{\pi^*_\alpha(y')}{\pi_\theta(y')}\right)^{\alpha^{\vphantom{A}}}\right)\,\,\,\left(\frac{\pi^*_\alpha(y)}{\pi_\theta(y)}-\frac{\pi^*_\alpha(y')}{\pi_\theta(y)}\right),
\end{split}	
\end{equation}
where the second equality follows from \eqref{eq:simpler vy},  the fourth equality is due to the symmetry of $h(y,y')$ with respect to $y$ and $y'$, i.e., $h(y,y')=h(y',y)$, and the sixth equality is from \eqref{eq:pi pialpha}.
It is easy to see that all terms in the sum in the last line are non-negative, and the sum contains at least one non-zero term if  $\pib_\theta\ne\pib^*_\alpha$. Therefore, $\vb^T (\pib_\theta-\pib^*_\alpha)<0$ if $\pib_\theta\ne\pib^*_\alpha$. Consequently, $\lVert\pib_\theta-\pib^*_\alpha\rVert$  is strictly decreasing when moving along $\vb$. Since both $\pib_\theta$ and $\pib^*_\alpha$ lie on the probability simplex, we have $\prod(\vb)^T (\pib_\theta-\pib^*_\alpha)\le\vb^T (\pib_\theta-\pib^*_\alpha)<0$. It follows that for any $\pib_\theta$ in the relative interior of the probability simplex, projection of $\vb$ on the probability simplex is a strictly decent direction for $\lVert\pib_\theta-\pib^*_\alpha\rVert$. 

As a result, $\pi^*_\alpha$ is the  globally absorbing unique fixed point of the ODE. Furthermore, when $\mu$ is not a function of $\pi_\theta$, then $\pi^*_\alpha$ is the unique first order stationary point of the preference loss $\losspref^{\alpha,\mu}$. In other words, $\losspref^{\alpha,\mu}$ contains no other local mininum, local maximum, or saddle-point in the probability simplex.

\medskip

\section{Proof of Theorem~\ref{th:convergence n} }\label{app:proof th n-wise}
This appendix presents the proof of Theorem~\ref{th:convergence n}. The high-level idea, akin to Appendix~\ref{app:proof th}, is to show that moving along the ODE direction results in an absolute reduction of the Euclidean distance of $\pi_\theta$ from $\softmax\big(r(\cdot|x)/\alpha\big)$. The details are however substantially different from Appendix~\ref{app:proof th}. 

We begin with the following lemma.
\begin{lemma}\label{lem:app th n ineq}
For any $\eta>0$ and any pair of $n$-dimensional vectors $\ab$ and $\bb$ with positive entries, we have
\begin{equation}
\sum_{i=1}^n \left(\frac{a_i}{b_i}\right)^\eta \left( \frac{b_i}{\sum_{j=1}^{n}b_j}-\frac{a_i}{\sum_{j=1}^{n}a_j}\right) \le0,
\end{equation}
and the equality holds only if $\ab=c\bb$ for some scalar $c$.
\end{lemma}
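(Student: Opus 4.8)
\textbf{Proof plan for Lemma~\ref{lem:app th n ineq}.}

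The plan is to reduce the inequality to a statement about a single ``reweighting'' operation and then exploit a monotonicity/rearrangement structure. Write $S_a=\sum_j a_j$ and $S_b=\sum_j b_j$, and set $t_i \defeq a_i/b_i>0$. Then $a_i = t_i b_i$, so $a_i/S_a = t_i b_i / (\sum_j t_j b_j)$, and the $i$-th summand becomes
\begin{equation}\label{eq:lem plan rewrite}
t_i^\eta\left(\frac{b_i}{S_b} - \frac{t_i b_i}{\sum_j t_j b_j}\right) = \frac{b_i}{S_b}\, t_i^\eta\left(1 - \frac{t_i S_b}{\sum_j t_j b_j}\right).
\end{equation}
Introduce the probability weights $w_i \defeq b_i/S_b$ (so $\sum_i w_i = 1$ and $w_i>0$) and let $\bar t \defeq \sum_j w_j t_j = (\sum_j t_j b_j)/S_b$ be the $w$-weighted mean of the $t_i$. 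Then the whole sum equals
\begin{equation}\label{eq:lem plan covariance}
\sum_i w_i\, t_i^\eta\left(1 - \frac{t_i}{\bar t}\right) = \frac{1}{\bar t}\sum_i w_i\, t_i^\eta\big(\bar t - t_i\big) = -\frac{1}{\bar t}\,\mathrm{Cov}_w\!\big(t^\eta,\, t\big),
\end{equation}
where $\mathrm{Cov}_w$ denotes covariance under the probability weights $w$, using $\sum_i w_i t_i^\eta \bar t = \bar t \sum_i w_i t_i^\eta$ and $\sum_i w_i t_i = \bar t$.

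Now the key observation is that $t\mapsto t^\eta$ is (strictly) increasing on $(0,\infty)$ for $\eta>0$, so $t^\eta$ and $t$ are comonotone random variables; hence their covariance under any probability distribution is nonnegative, i.e. $\mathrm{Cov}_w(t^\eta,t)\ge 0$. Since $\bar t>0$, \eqref{eq:lem plan covariance} is $\le 0$, which is the claimed inequality. For the equality case, $\mathrm{Cov}_w(t^\eta,t)=0$ with $w_i>0$ and $t\mapsto t^\eta$ strictly monotone forces $t_i$ to be constant over $i$ (a standard fact: Chebyshev's sum inequality is strict unless the sequence is constant; equivalently, write $\mathrm{Cov}_w(t^\eta,t)=\tfrac12\sum_{i,j}w_iw_j(t_i^\eta-t_j^\eta)(t_i-t_j)$, where every term is $\ge0$ and vanishes only when $t_i=t_j$). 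Thus $a_i/b_i = t_i \equiv c$ for all $i$, i.e. $\ab = c\bb$, and conversely $\ab=c\bb$ clearly gives equality.

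I expect no serious obstacle here; the only care needed is the bookkeeping in passing from \eqref{eq:lem plan rewrite} to the covariance form \eqref{eq:lem plan covariance} and in justifying strictness of the equality case. The cleanest self-contained route for equality is the double-sum identity $\mathrm{Cov}_w(t^\eta,t)=\tfrac12\sum_{i,j}w_i w_j\,(t_i^\eta - t_j^\eta)(t_i - t_j)$, since each summand is a product of two factors of the same sign (by monotonicity of $s\mapsto s^\eta$), so the sum is $0$ iff $t_i=t_j$ whenever $w_iw_j>0$, i.e. for all $i,j$.
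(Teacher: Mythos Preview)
Your argument is correct and considerably cleaner than the paper's. The paper fixes $\ab$, views the left-hand side as a function $f(\xb)$ over the probability simplex, argues that the maximizer lies in the interior (since $f\to-\infty$ on the boundary), and then analyzes the first-order Lagrangian condition. This yields $h(a_k/x_k)=\textrm{const}$ for a certain scalar function $h$, and the paper must split into two cases: for $\eta\le 1$ the function $h$ is strictly increasing so all ratios $a_k/x_k$ coincide; for $\eta>1$ the function $h$ is merely unimodal, so the ratios can take at most two distinct values, and an explicit two-block computation is then carried out to verify $f\le 0$ on that restricted set.

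By contrast, your substitution $t_i=a_i/b_i$, $w_i=b_i/S_b$ collapses the whole statement to $\mathrm{Cov}_w(t^\eta,t)\ge 0$, which is immediate from the comonotonicity of $t$ and $t^\eta$ via the identity $\mathrm{Cov}_w(t^\eta,t)=\tfrac12\sum_{i,j}w_iw_j(t_i^\eta-t_j^\eta)(t_i-t_j)$. This avoids the optimization machinery entirely, treats all $\eta>0$ uniformly with no case split, and gives the equality characterization for free from strict monotonicity and $w_i>0$. The paper's route has the minor byproduct of describing the structure of critical points of $f$, but for the lemma as stated your proof is shorter and more transparent.
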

\begin{proof}[Proof of Lemma~\ref{lem:app th n ineq}]
Fix an arbitrary vector $\ab$ with positive entries, and consider the following function 
\begin{equation}\label{eq:lem def f}
    f(\xb)\defeq \sum_{i=1}^n \left(\frac{a_i}{x_i}\right)^\eta \left( \frac{x_i}{\sum_{j=1}^{n}x_j}-\frac{a_i}{\sum_{j=1}^{n}a_j}\right),\qquad\textrm{for}\quad \xb\in\R_+^n,
\end{equation}
defined on the positive quadrant. We will show that $f(\xb)\le0$, for all $x\in\R^n_+$. Note that if $f(\xb)>0$ for some $x$, then 
$f(c\xb)=f(\xb)/c^\eta>0$, for all $c>0$. Therefore, without loss of generality, we confine the domain to a compact set, say to the probability simplex $\S\defeq\{\xb\in \R_+^*:\,\, \sum_{i=1}^n x_i=1\}$, 
and show that $f(\xb)\le0$ for all $\xb\in \S$. In the same vein, without loss of generality we also assume that 
\begin{equation}\label{eq:sum a=1}
    \sum_{i=1}^n a_i=1.
\end{equation} 
Note that $f(\xb)=-\infty$ on the boundary of the probability simplex, that is if $x_i= 0$ for some $i$. Therefore, the maximizer $\xb^*$ of $f$ over $\S$, lies in the relative interior of $\S$. Consequently, the gradient of the Lagrangian of $f$ at $\xb^*$ is zero. 
The Lagrangian $L$ of $f$ is as follows:
\begin{equation}
    L(\xb,\lambda)\defeq f(\xb) + \lambda \left(\sum_{i=1}^n x_i -1\right), \qquad \textrm{for}\,\, \xb\in\S, \,\lambda\in\R.
\end{equation}
Then,
\begin{equation}\label{eq:lem lagrangian}
\begin{split}
    \frac{d}{d\,x_k} L(\xb,\lambda)
    &= 
    \frac{d}{d\,x_k} f(\xb) + \lambda 
    \\ &=
     \frac{d}{d\,x_k}  \sum_{i=1}^n \left(\frac{a_i}{x_i}\right)^\eta \left( \frac{x_i}{\sum_{j=1}^{n}x_j}-\frac{a_i}{\sum_{j=1}^{n}a_j}\right) + \lambda 
    \\&=
    \frac{d}{d\,x_k} \sum_{i=1}^n \left(\frac{a_i^\eta x_i^{1-\eta}}{\sum_{j=1}^n x_j} -a_i^{1+\eta}x_i^{-\eta}\right) + \lambda
    \\&=
    \frac{(1-\eta)a_k^\eta x_k^{-\eta}}{\sum_{j=1}^n x_j} - \frac{\sum_{i=1}^n a_i^\eta x_i^{1-\eta}}{\left(\sum_{j=1}^n x_j\right)^2} + \eta a_k^{1+\eta}x_k^{-\eta-1} +\lambda
    \\&=
    (1-\eta)\left(\frac{a_k}{x_k}\right)^\eta  + \eta \left(\frac{a_k}{x_k}\right)^{1+\eta} + \left[    \lambda-\sum_{i=1}^n a_i^\eta x_i^{1-\eta}\right]
    \end{split}
\end{equation}
where the third equality is due to \eqref{eq:sum a=1}, and the last equality is because $\sum_jx_j=1$. Consider a scalar function $h:\R_+\to\R_+$ as follows
\begin{equation}\label{eq:def h}
    h(y)\defeq(1-\eta)\, y^\eta\,+\,\eta \,y^{1+\eta} \qquad \textrm{for}\quad y\ge0.
\end{equation}
Then, \eqref{eq:lem lagrangian} simplifies to
\begin{equation}
    \frac{d}{d\,x_k} L(\xb,\lambda) = h\left(\frac{a_k}{x_k}\right) + C(\lambda,\xb,\ab),
\end{equation}
where $C(\lambda,\xb,\ab)=\lambda-\sum_{i=1}^n a_i^\eta x_i^{1-\eta}$ is independent of $k$.
Therefore, letting $\nabla_\xb L(\xb,\lambda)=0$ at $\xb=\xb^*$, it follows that for any $1\le i<j\le n$,
\begin{equation}\label{eq:h eq}
h\left(\frac{a_i}{x_i^*}\right) = h\bigg(\frac{a_j}{x_j^*}\bigg).
\end{equation}
We now consider two cases for $\eta$. 

{\bf Case 1} ($\eta\le1$){\bf.} In this case, $h$ defined in \eqref{eq:def h} is an strictly increasing function. Therefore, \eqref{eq:h eq} implies that $a_i/x_i^*=a_j/x_j^*$, for all $i,j\le n$. Equivalently, $\xb^*=c\ab$ for some scalar $c>0$. In this case, from \eqref{eq:lem def f}, $f(\xb^*)=0$. The lemma then follows from the fact that  $\xb^*$ is the maximizer of $f$.

{\bf Case 2} ($\eta>1$){\bf.} In this case, $h$ is no longer increasing. In this case, $h$ is unimodal. Specifically, $h$ is strictly decreasing over $\big[0,(\eta-1)/(\eta+1)\big]$ and is strictly increasing over $\big[(\eta-1)/(\eta+1),\infty\big]$. This unimodality implies that the pre-image of any $y\in\R_+$ (i.e., $h^{-1}(y)$) is a set of at most two points. Consequently, \eqref{eq:h eq} implies that we can partition the indices $1,\ldots,n$ into two groups $S_1$ and $S_2$ such that within each group, we have $a_i/x_i^*=a_j/x_j^*$. In other words, $a_i/x_i^*=a_j/x_j^*$ for all $(i,j)\in S_1\times S_1$ and all  $(i,j)\in S_2\times S_2$. Equivalently, the maximum point, $\xb^*$, belongs to the set 
\begin{equation}\label{eq:X*}
X^*\defeq\big\{\xb\in\R_+^n:\,\, x_i=c_1a_i\textrm{ for }i\le k, \textrm{ and }x_i=c_2a_i\textrm{ for }i> k, \textrm{ for some } c_1,c_2>0\textrm{ and } k<n\big\},
\end{equation}
where we have assumed without loss of generality that $S_1=\{1,\ldots,k\}$ and $S_2=\{k+1,\ldots,n\}$ for some $k\le n$. 
We will show that $f(\xb)\le0$ for all $\xb\in X^*$.

Fix some $\xb\in X^*$, and corresponding constants $c_1$, $c_2$, and $k$, as per \eqref{eq:X*}. Let $A=\sum_{i=1}^{k} a_i$ and $B=\sum_{i=k+1}^{n} a_i$. Then,
\begin{equation*}
    \begin{split}
    f(\xb) &= \sum_{i=1}^n \left(\frac{a_i}{x_i}\right)^\eta \left( \frac{x_i}{\sum_{j=1}^{n}x_j}-\frac{a_i}{\sum_{j=1}^{n}a_j}\right)
    \\
    &= \sum_{i=1}^n \left(\frac{a_i}{x_i}\right)^\eta \left( \frac{x_i}{c_1A+c_2B}-\frac{a_i}{A+B}\right)
    \\
    &= \sum_{i=1}^k c_1^{-\eta} \left(\frac{c_1a_i}{c_1A+c_2B}-
    \frac{a_i}{A+B}\right)
    + \sum_{i=k+1}^n c_2^{-\eta}\left( \frac{c_2a_i}{c_1A+c_2B}-
    \frac{a_i}{A+B}\right)
    \\
    &=   \left(\frac{c_1^{1-\eta}A}{c_1A+c_2B}-
    \frac{c_1^{-\eta}A}{A+B}\right)
    + \left( \frac{c_2^{1-\eta}B}{c_1A+c_2B}-
    \frac{c_2^{-\eta}B}{A+B}\right)
    \\
    &=   \frac{c_1^{1-\eta}A+c_2^{1-\eta}B}{c_1A+c_2B}-
    \frac{c_1^{-\eta}A+c_2^{-\eta}B}{A+B}
    \\
    &=   \frac{(c_1^{1-\eta}A+c_2^{1-\eta}B)(A+B) - (c_1^{-\eta}A+c_2^{-\eta}B)(c_1A+c_2B)}{(c_1A+c_2B)(A+B)}
    \\
    &=  \frac{(c_1-c_2)(c_1^{-\eta}-c_2^{-\eta})AB}{(c_1A+c_2B)(A+B)}
    \\
    &\le0,
    \end{split}
\end{equation*}
and the inequality in the last line holds with equality iff either $A$ or $B$ are zero (note that $c_1,c_2,\eta>0$), which is the case only if $\xb=c_1\ab$ or $\xb=c_2\ab$. The lemma then follows from the fact that  $\xb^*$ is the maximizer of $f$. 

This completes the proof of Lemma~\ref{lem:app th n ineq}.
\end{proof}

We proceed with the proof of the theorem. 
Given the rewards $r(\cdot|\cdot)$ in the $n$-ary BT model (see Section~\ref{sec:rank}), let
\begin{equation}
	\pi^*(\cdot|\cdot) \defeq \softmax\big(r(\cdot|\cdot)\big).
\end{equation}
For any $(x;y_1,\ldots,y_n)\in\X\times\Y^n$, let
\begin{equation}
    \bar{\D}(x;y_1,\ldots,y_n) \defeq \frac{\sum_{i=1}^n \D(x;y_1,\ldots,y_n; i)}{\sum_{i=1}^n \pi^*(y_i|x) }.
\end{equation}
It then follows from the consistency of $\D$ with the $n$-ary BT model that for any $(x;y_1,\ldots,y_n)\in\X\times\Y^n$ and  $i=1,\ldots,n$
\begin{equation}\label{eq:app D Dbar}
\D(x;y_1,\ldots,y_n; i) = \bar{\D}(x;y_1,\ldots,y_n) \,\pi^*(y_i|x).
\end{equation}
We further define
\begin{equation}\label{eq:app D tilde}
\tilde{\D}(x;[y]) \defeq \bar{\D}(x;y_1,\ldots,y_n) \, \mu(x;y_1,\ldots,y_n).
\end{equation}
For brevity of notation, we denote $y_1,\ldots,y_n$ by $[y]$ and denote $1,\ldots,n$ by $[n]$.
The loss function $\lossprefn^{\alpha,\mu}(\pi,\D)$  defined in \eqref{eq:pref loss n} can then be simplified to
\begin{equation*}
\begin{split}
	\lossprefn^{\alpha,\mu}(\pi,\D)
    &= -\frac{1}{\alpha}\Exp_{(x;y_1,\ldots,y_n;i^*)\sim\D}\left[\mu(y_1,\ldots,y_n\mid x)\, \log\,	 \frac{\pi (y_{i^*}\mid x)^\alpha}{\sum_{i=1}^n\pi (y_{i}\mid x)^\alpha } \right]\\
    &= -\frac{1}{\alpha}\sum_{(x;[y];i^*)\in \X\times\Y^n\times[n]} \D(x;[y]; i^*)\,\mu([y]| x)\, \log\,	 \frac{\pi (y_{i^*}\mid x)^\alpha}{\sum_{i=1}^n\pi (y_{i}\mid x)^\alpha} \\
    &= -\frac{1}{\alpha}\sum_{(x;[y];i^*)\in \X\times\Y^n\times[n]} \tilde{D}(x;[y])\,\pi^*(y_{i^*}|x) \log\,	 \frac{\pi (y_{i^*}\mid x)^\alpha}{\sum_{i=1}^n\pi (y_{i}\mid x)^\alpha} \\
    &= -\frac{1}{\alpha}\sum_{(x;[y])\in \X\times\Y^n} \tilde{D}(x;[y])\,\sum_{i=1}^n\pi^*(y_i|x) \log\,	 \frac{\pi (y_{i}\mid x)^\alpha}{\sum_{j=1}^n\pi (y_{j}\mid x)^\alpha},
 \end{split}
\end{equation*}
where the third equality is due to \eqref{eq:app D Dbar} and \eqref{eq:app D tilde}.

In the rest of the proof, without loss of generality, we consider a single fixed  $x\in\X$, and remove $x$ from the notations for the sake of notation brevity.
Let
\begin{equation}
\pi^*_\alpha(\cdot)\defeq\softmax\big(r(\cdot)/ \alpha\big).
\end{equation}
It follows that for any $y\in\Y$,
\begin{equation}\label{eq:app n pialpha* to pi*}
\pi^*_\alpha(y)=\frac{\pi^*(y)^{1/\alpha}}{\sum_{\yt\in \Y}\pi^*(\yt)^{1/\alpha} }.
\end{equation}
Let $\pib$ and $\pib^*_\alpha$ be the vector representation of $\pi(y)$ and $\pi^*_\alpha(y)$ for all $y\in\Y$.
Then, for $\vb\defeq-\nabla_\pi\lossprefn^{\alpha,\mu}(\pi,\D)$ we have
\begin{equation}\label{eq:app n inner prod pi}
\begin{split}
    \,\,\,\,\,\,\,&\!\!\!\!\!\!\!
    \left(\pib-{\pib^*}^{1/\alpha}\right)^T\,\vb = -\left(\pib-{\pib^*_\alpha}\right)^T\,\nabla_\pi\lossprefn^{\alpha,\mu}(\pi,\D) \\
    &= \frac{1}{\alpha}\sum_{[y]\in \Y^n} \tilde{D}([y])\,\left(\pib-{\pib^*_\alpha}\right)^T\, \nabla_\pi\sum_{i=1}^n\pi^*(y_i) \,\log\,\frac{\pi (y_{i}\mid x)^\alpha}{\sum_{j=1}^n\pi (y_{j}\mid x)^\alpha} \\
    &= \frac{1}{\alpha}\sum_{[y]\in \Y^n} \tilde{D}([y])\,\sum_{\yt\in\Y}\left(\pi(\yt)-{\pi^*_\alpha}(\yt)\right)\, \frac{d}{d\,\yt}\sum_{i=1}^n\pi^*(y_i) \,\log\,\frac{\pi (y_{i}\mid x)^\alpha}{\sum_{j=1}^n\pi (y_{j}\mid x)^\alpha} \\
    &= \frac{1}{\alpha}\sum_{[y]\in \Y^n} \tilde{D}([y])\,\sum_{k=1}^{n}\left(\pi(y_k)-{\pi^*_\alpha}(y_k)\right)\, \frac{d}{d\,y_k}\sum_{i=1}^n\pi^*(y_i) \,\log\,\frac{\pi (y_{i}\mid x)^\alpha}{\sum_{j=1}^n\pi (y_{j}\mid x)^\alpha} .
 \end{split}
\end{equation}
For any $[y]=(y_1,\ldots,\y_n)\in\Y^n$, let
\begin{equation}\label{eq:def Ay}
    A\big([y]\big)\defeq \frac{1}{\alpha}\sum_{k=1}^{n}\left(\pi(y_k)-{\pi^*_\alpha}(y_k)\right)\, \frac{d}{d\,y_k}\sum_{i=1}^n\pi^*(y_i) \,\log\,\frac{\pi (y_{i}\mid x)^\alpha}{\sum_{j=1}^n\pi (y_{j}\mid x)^\alpha}.
\end{equation}
It then follows from \eqref{eq:app n inner prod pi} that:
\begin{equation}\label{eq:app n inner prod pi 2}
\vb^T\left(\pib-{\pib^*}^{1/\alpha}\right)=  \sum_{[y]\in \Y^n} \tilde{D}([y])\,A\big([y]\big).
\end{equation}
We proceed to compute $A\big([y]\big)$.
For $k=1,\ldots,n$, 
\begin{equation} \label{eq:partial derivative yk}
\begin{split}
    \frac{d}{d\,y_k}\sum_{i=1}^n\frac{\pi (y_{i}\mid x)^\alpha}{\sum_{j=1}^n\pi (y_{j}\mid x)^\alpha} 
    &=
    \frac{d}{d\,y_k}\sum_{i=1}^n\pi^*(y_i) \left(\log\,	 \pi(y_{i})^\alpha
    \,-\,\log\,\sum_{j=1}^n\pi(y_{j})^\alpha\right)
    \\
    &=
    \alpha\frac{\pi^*(y_k)}{\pi(y_k)} 
    \,-\,\left(\sum_{i=1}^n\pi^*(y_i)\right)\frac{d}{d\,y_k}\log\,\sum_{j=1}^n\pi(y_{j})^\alpha\\
    &=\alpha\frac{\pi^*(y_k)}{\pi(y_k)} 
    \,-\,\alpha\left(\sum_{i=1}^n\pi^*(y_i)\right)\frac{\pi(y_{k})^{\alpha-1}}{\sum_{j=1}^n\pi(y_{j})^{\alpha}}\\
    \end{split}
\end{equation}
Plugging this into the definition of $A\big([y]\big)$ in \eqref{eq:def Ay}, we obtain
\begin{equation}
\begin{split}
    A\big([y]\big) &= \sum_{k=1}^{n}\left(\pi(y_k)-{\pi^*_\alpha}(y_k)\right)\, \left(\frac{\pi^*(y_k)}{\pi(y_k)} 
    \,-\,\left(\sum_{i=1}^n\pi^*(y_i)\right)\frac{\pi(y_{k})^{\alpha-1}}{\sum_{j=1}^n\pi(y_{j})^{\alpha}}\right)
    \\
    &= \sum_{k=1}^{n}\pi(y_k)\, \left(\frac{\pi^*(y_k)}{\pi(y_k)} 
    \,-\,\left(\sum_{i=1}^n\pi^*(y_i)\right)\frac{\pi(y_{k})^{\alpha-1}}{\sum_{j=1}^n\pi(y_{j})^{\alpha}}\right)
    \\
    &\quad + \sum_{k=1}^{n}{\pi^*_\alpha}(y_k) \left(
    \left(\sum_{i=1}^n\pi^*(y_i)\right)\frac{\pi(y_{k})^{\alpha-1}}{\sum_{j=1}^n\pi(y_{j})^{\alpha}}
    \,-\,\frac{\pi^*(y_k)}{\pi(y_k)} \right)
    \\
    &= \sum_{k=1}^{n}\pi^*(y_k) 
    \,-\,\left(\sum_{i=1}^n\pi^*(y_i)\right)\frac{\sum_{k=1}^{n}\pi(y_{k})^{\alpha}}{\sum_{j=1}^n\pi(y_{j})^{\alpha}}
    \\
    &\quad + \sum_{k=1}^{n}{\pi^*_\alpha}(y_k) \left(
    \left(\sum_{i=1}^n\pi^*(y_i)\right)\frac{\pi(y_{k})^{\alpha-1}}{\sum_{j=1}^n\pi(y_{j})^{\alpha}}
    \,-\,\frac{\pi^*(y_k)}{\pi(y_k)} \right)
    \\
    &=\sum_{k=1}^{n}{\pi^*_\alpha}(y_k) \left(
    \left(\sum_{i=1}^n\pi^*(y_i)\right)\frac{\pi(y_{k})^{\alpha-1}}{\sum_{j=1}^n\pi(y_{j})^{\alpha}}
    \,-\,\frac{\pi^*(y_k)}{\pi(y_k)} \right)
    \\
    &=\frac{\sum_{i=1}^n\pi^*(y_i)}{\sum_{i=1}^n\pi^*(y_i)^{1/\alpha}}
    \,\,\sum_{k=1}^{n}\left(\frac{\pi^*(y_k)}{\pi(y_k)^\alpha}\right)^{1/\alpha} \left(
    \frac{\pi(y_{k})^{\alpha}}{\sum_{j=1}^n\pi(y_{j})^{\alpha}}
    \,-\,\frac{\pi^*(y_k)}{\sum_{i=1}^n\pi^*(y_i)} \right)
    \\
    &\le0 \qquad (\textrm{\tt``$=$'' only if } \pib^*=c\pib \textrm{\tt  \,\,\,for some scalar } c>0),
    \end{split}
\end{equation}
where the last equality is due to \eqref{eq:app n pialpha* to pi*}, and the  inequality in the last line follows from Lemma~\ref{lem:app th n ineq} by letting $a_k=\pi^*(y_k)$, $b_k=\pi(y_k)^\alpha$, and $\eta=1/\alpha$.
Plugging this into \eqref{eq:app n inner prod pi 2}, it follows that 
\begin{equation} \label{eq:th n decreasing}
-\left(\nabla_\pi\lossprefn^{\alpha,\mu}(\pi,\D)\right)^T\left(\pib-{\pib^*_\alpha}\right) = \vb^T (\pib-\pib^*_\alpha)<0
\end{equation}
if $\pib \ne\pib^*_\alpha$. Consequently, $\lVert\pib -\pib^*_\alpha\rVert$  is strictly decreasing when moving along $\vb$.
Since both $\pib$ and $\pib^*_\alpha$ lie on the probability simplex, we have $\prod(\vb)^T (\pib-\pib^*_\alpha)\le\vb^T (\pib-\pib^*_\alpha)<0$. It follows that for any $\pib$ in the relative interior of the probability simplex, projection of $\vb$ on the probability simplex is a strictly decent direction for $\lVert\pib-\pib^*_\alpha\rVert$. 
As a result, $\pib^*_\alpha$ is the  globally absorbing unique fixed point of the ODE.
This completes the proof of Theorem~\ref{th:convergence n}.

\medskip
\section{Proof of Theorem~\ref{th:convergence ranking}} \label{app:proof th ranking}
Here we present the proof of Theorem~\ref{th:convergence ranking}. The high level idea is to show that $\lossrank^{\alpha,[\mu]}(\pi,\D)$ can be equivalently written as the sum of $\lossprefn^{\alpha,\mu_k}(\pi,\D_k)$ for appropriately defined $\D_k$, for $k=1,\ldots,n-1$; where each $\D_k$ is consistent with the $(n-k+1)$-ary BT model (defined in Section~\ref{sec:rank}).
We then use Theorem~\ref{th:convergence n}, and in particular \eqref{eq:th n decreasing} in the proof of Theorem~\ref{th:convergence n}, to conclude that the softmax distribution is a globally absorbing fixed point of $-\nabla\lossprefn^{\alpha,\mu_k}(\pi,\D_k)$ for $k=1,\ldots,n-1$, and  is therefore a globally absorbing fixed point of their sum, $-\nabla\lossrank^{\alpha,[\mu]}(\pi,\D)$.

As in the previous appendices, without loss of generality we prove the theorem for a single fixed $x\in\X$, and remove $x$ from the equations for notation brevity. 
To further simplify the notation, without loss of generality, we also remove the permutation $\tau$ from the equations, and represent the ranking by mere order of the indices, that is we assume that $y_1\succ y_2\succ\cdots\succ y_n$. With these new conventions, the ranking loss \eqref{eq:loss rank} simplifies to
\begin{equation}\label{eq:pref loss n in proof}
	\lossrank^{\alpha,[\mu]}(\pi,\D) \defeq -\frac{1}{\alpha}\Exp_{(y_1,\ldots,y_n)\sim\D}\left[ \sum_{k=1}^{n-1}\mu_k(y_k,\ldots,y_n)\,\log\,	 \frac{\pi(y_k)^\alpha}{\sum_{j=k}^n\pi(y_j)^\alpha
 } \right].
\end{equation}
For $k=1,\ldots,n-1$, we define an $(n-k+1)$-ary preference distribution $\D_k$ as follows. For any $(y_1,\ldots,y_{n-k+1})\in \Y^n$ and $i=1,\ldots,n-k+1$,
\begin{equation}
    \D_k(y_1,\ldots,y_{n-k+1}; i)
    = 
    \frac{1}{(n-k)!}\!\!\!\!\!\!\!\!\!\!\!\!\!\!\!\!\!\!\!\!\sum_{\substack{(z_1,\ldots,z_{k-1})\in \Y^{k-1}  \\ \textrm{Permutation } \tau:(1,\ldots,n-k)\to(1,\ldots,\renewcommand{\CancelColor}{\color{red}}
    \cancel{i},\ldots,n-k+1)}}
    \!\!\!\!\!\!\!\!\!\!\!\!\!\!\!\!\!\!\!\!
    \D(z_1,\ldots,z_{k-1},y_i,y_{\tau(1)},\ldots,y_{\tau(n-k)}).
\end{equation}
From \eqref{eq:pref loss n in proof}, we have
\begin{equation*}
\begin{split}
	\,\,\,\,\,\,\,\,&\!\!\!\!\!\!\!\!\lossrank^{\alpha,[\mu]}(\pi,\D) 
    =
    -\frac{1}{\alpha}\Exp_{(y_1,\ldots,y_n)\sim\D}\left[ \sum_{k=1}^{n-1}\mu_k(y_k,\ldots,y_n)\,\log\,	 \frac{\pi(y_k)^\alpha}{\sum_{j=k}^n\pi(y_j)^\alpha
 } \right]
 \\
 &=
    -\frac{1}{\alpha}\sum_{k=1}^{n-1}\Exp_{(y_1,\ldots,y_n)\sim\D}\left[\mu_k(y_k,\ldots,y_n)\,\log\,	 \frac{\pi(y_k)^\alpha}{\sum_{j=k}^n\pi(y_j)^\alpha
 } \right]
 \\
 &=
    -\frac{1}{\alpha}\sum_{k=1}^{n-1}\sum_{(y_1,\ldots,y_n)\in\Y^n}\D(y_1,\ldots,y_n)\left[\mu_k(y_k,\ldots,y_n)\,\log\,	 \frac{\pi(y_k)^\alpha}{\sum_{j=k}^n\pi(y_j)^\alpha
 } \right]
 \\
 &=
    -\frac{1}{\alpha}\sum_{k=1}^{n-1}\sum_{y_k,\ldots,y_n 
    }
    \sum_{(y_1,\ldots,y_{k-1})\in\Y^{k-1}}\!\!\!\!\D(y_1,\ldots,y_n)\left[\mu_k(y_k,\ldots,y_n)\,\log\,	 \frac{\pi(y_k)^\alpha}{\sum_{j=k}^n\pi(y_j)^\alpha
 } \right]
 \\
 &=
    -\frac{1}{\alpha}\sum_{k=1}^{n-1}\sum_{y_k,\ldots,y_n 
    }
    \frac{\D_k(y_1,\ldots,y_{n-k+1}; 1)}{\big((n-k)!\big)^2}\left[\mu_k(y_k,\ldots,y_n)\,\log\,	 \frac{\pi(y_k)^\alpha}{\sum_{j=k}^n\pi(y_j)^\alpha
 } \right]
\\
&=
-\frac{1}{\alpha}\sum_{k=1}^{n-1}
\frac1{(n-k+1)!\,(n-k)!}\Exp_{(y_k,\ldots,y_n;i)\sim\D_k}
\left[\mu_k(y_k,\ldots,y_n)\,\log\,	 \frac{\pi(y_k)^\alpha}{\sum_{j=k}^n\pi(y_j)^\alpha
 } \right]
 \\
 &=
 \sum_{k=1}^{n-1}
\frac{\lossprefn^{\alpha,\mu_k}(\pi,\D_k)}{(n-k+1)!\,(n-k)!}.
 \end{split}
\end{equation*}
Let $\pib$ and $\pib^*_\alpha$ be the vector representations of $\pi$ and the softmax distribution $\pi^*_\alpha$ (defined in \eqref{eq:app n pialpha* to pi*}), and $\vb\defeq-\nabla_\pi \lossrank^{\alpha,[\mu]}(\pi,\D) $. Then, 
\begin{equation*} 
\begin{split}
\left(\pib-{\pib^*_\alpha}\right)^T\vb
&=
-\left(\pib-{\pib^*_\alpha}\right)^T \nabla\sum_{k=1}^{n-1}
\frac{\lossprefn^{\alpha,\mu_k}(\pi,\D_k)}{(n-k+1)!\,(n-k)!}
\\
&=
\sum_{k=1}^{n-1}
\frac{-\left(\pib-{\pib^*_\alpha}\right)^T\nabla\lossprefn^{\alpha,\mu_k}(\pi,\D_k)}{(n-k+1)!\,(n-k)!}
\\
&\le0,
\end{split}
\end{equation*}
where the last inequality follows from \eqref{eq:th n decreasing}, and it holds with equality only if $\pib \ne\pib^*_\alpha$. 
Since both $\pib$ and $\pib^*_\alpha$ lie on the probability simplex, we have $\prod(\vb)^T (\pib-\pib^*_\alpha)\le\vb^T (\pib-\pib^*_\alpha)<0$. Then, following a similar argumnet as in the last paragraph of Appendix~\ref{app:proof th n-wise}, we conclude that $\pib^*_\alpha$ is the  globally absorbing unique fixed point of the ODE.
This completes the proof of Theorem~\ref{th:convergence ranking}.

\medskip

\section{Experiment Details} 
\subsection{Details for the AlpacaFarm experiment of Section~\ref{sec:exp alpaca}}
\label{app:experiment details for Alpaca}
We performed the alignment procedure on 4 NVIDIA H100 (94 GiB) GPUs. For each method, we trained the model for four epochs and reported the maximum win-rate against the SFT model. 
The training batch size per GPU device was set to one and the gradient accumulation step was 16. The range of hyper-parameters considered for each method is given as follows: R-DPO: $\beta\in \{0.01,0.1\}$, $\alpha\in\{0.001,0.01,0.1\}$, CPO: $\beta\in \{0.001,0.01,0.1\}$, $\alpha\in\{0.0001,0.001,0.01\}$, SimPO: $\beta\in\{2, 2.5\}$, $\gamma\in\{1, 1.5\}$ (the suggested range in SimPO), KTO: $\beta\in\{0.01,0.1\}$, $\lambda_D=\lambda_U=1$, IPO: $\tau\in\{0.001,0.01,0.1\}$, DPO: $\beta\in\{0.001,0.01,0.1\}$, and SPO: $\alpha\in\{0.001,0.01,0.1\}$.

\subsection{Details for the experiment of Section~\ref{sec:exp ranking}}
\label{app:exp ranking}
We trained the models on NVIDIA A100 (40 GiB) GPUs. 
We used a batch size of 32 samples (each containing four responses) for all algorithms.  The reference model in all algorithms was identical to the SFT model. All alignment loss functions were optimized using AdamW with 5,000 warm-up iterations. 

For SPO,  we trained both basic (i.e., $\gamma=0$) and weighted version. For  weighted SPO, we set $\gamma=0.01$ without sweeping, and in the ranking experiment we used decayed weight functions $\eta^k \mu_k$ for $\eta\in {1,0.5}$, see \eqref{eq:loss rank}. For other SPO parameters, we swept over $\beta\in\{0.01,0.1\}$, and $\alpha\in \{0.001\}$. For $\dkl$ computation, we used intermittent batch generation of samples, generating a batch of 32 samples from $\pi_\theta$ once every 8 iterations (i.e., $T=8$).
For other algorithms, we swept over the following sets of hyperparameters: DPO: $\beta\in \{0.0001,0.001,0.01\}$, S-DPO for ranking: $\beta \in  \{0.0001, 0.001, 0.01\}$, and S-DPO for best-of-$n$: $\beta \in  \{0.0001, 0.001, 0.01\}$. For For training the S-DPO  algorithm on the best-of-$n$ dataset, we consider each top-rank response as the positive response and the corresponding lower-rank responses as the corresponding negative set of responses. For training  S-DPO  on ranking dataset, each of the 1st, 2nd, and 3rd rank responses serve as positive responses with the corresponding negative set containing the corresponding lower rank responses.

We computed the win rates of all methods against the best-of-$n$ SFT model using GPT4o-2024-08-06 once every 1000 iterations, and reported the peak win-rate for each method.  Each win-rate was averaged over 1,000 story-pair instances, resulting in an estimation error with  standard deviation smaller than $0.015$.

\section{Details of Generation of Ranking Dataset for the TinyStories Experiment}
\label{app:prompt ranking}
We created a preference dataset to align stories with older age groups. Specifically, for each pair of stories generated by the reference model, we asked GPT4o--2024-08-06 to evaluate them based on clarity and coherence, writing quality, and whether they are interesting and engaging for high school students. The API was asked to evaluate each story independently based on these criteria, and identify its strengths and weaknesses compared to other stories; and then suggest a ranking of stories from best to worst.  The prompt used for generating the dataset is provided at the end of this subsection.

We generated a set of 100,000 stories independently from  the 110M-parameter pre-trained model \citep{HuggingfaceGIT}, and grouped them into a set of 25,000 samples each containing four stories. To enhance the quality of the ranking dataset, for each sample we used the prompt to rank the stories twice, reversing the order of the stories in the second evaluation. We retained samples only if both evaluations showed a consistent ranking. After this filtration,  5,000 samples remained for use in the ranking dataset.

\colorbox{verylightgrey}{\parbox{1\linewidth}{{\normalsize Prompt for Generation of the Ranking Dataset for TinyStories Experiment:\\}
    \small \setstretch{1} \tt
    
You are tasked with deciding which of the four short stories below, written by high school students, is better suited for publication in the high school newspaper.

\medskip

**Story 1:** \quad \{\}

\medskip

**Story 2:**  \quad \{\}

\medskip

**Story 3:** \quad \{\}

\medskip

**Story 4:** \quad \{\}

\medskip
{\bf **Your Task:**}

1. **Evaluate Each Story Individually:**

$\phantom{.}$Identify the **strengths** and **weaknesses** of each story, focusing on:

 $\phantom{a}$\quad    - **Engagement:** Is the story interesting and likely to captivate high school students?
 
 $\phantom{a}$\quad    - **Clarity and Coherence:** Is the story well-organized and easy to follow?
     
 $\phantom{a}$\quad    - **Writing Quality:** Assess the grammar, vocabulary, and overall language use.
 
\medskip

2. **Make a Final Decision:**

$\phantom{a}$\quad   - Based on your evaluations, decide which story is better suited for publication. Rank the stories from best to worst.

\medskip 

{\bf **Response Format:**}

***

**Evaluation:**

**Story 1:**

- **Strengths compared to other stories:**

$\phantom{a}$\quad  - [List strengths]

- **Weaknesses compared to other stories:**

$\phantom{a}$\quad   - [List weaknesses]

---

**Story 2:**

- **Strengths compared to other stories:**

$\phantom{a}$\quad  - [List strengths]

- **Weaknesses compared to other stories:**

$\phantom{a}$\quad   - [List weaknesses]

---

**Story 3:**

- **Strengths compared to other stories:**

$\phantom{a}$\quad  - [List strengths]

- **Weaknesses compared to other stories:**

$\phantom{a}$\quad   - [List weaknesses]

---

**Story 4:**

- **Strengths compared to other stories:**

$\phantom{a}$\quad  - [List strengths]

- **Weaknesses compared to other stories:**

$\phantom{a}$\quad   - [List weaknesses]

---

**Conclusion:**

- **Overall Ranking of the Stories:**

$\phantom{a}$\quad   1. **1st Place:** Story [1, 2, 3, or 4]
  
$\phantom{a}$\quad   2. **2nd Place:** Story [1, 2, 3, or 4]
  
$\phantom{a}$\quad   3. **3rd Place:** Story [1, 2, 3, or 4]
  
$\phantom{a}$\quad   4. **4th Place:** Story [1, 2, 3, or 4]

***

\medskip

{\bf **Guidelines:**}

$\phantom{a}$\quad - In each evaluation, compare the story to the others, noting unique strengths and weaknesses.

$\phantom{a}$\quad - Evaluation of each story shold not be influenced from prior evaluations that you provided earlier.

$\phantom{a}$\quad - Do not let the presentation order affect your judgment; treat all stories equally.

}}

\end{document}